\relax
\documentclass[letterpaper]{article}
\usepackage{times}
\usepackage{helvet}
\usepackage{courier}
\usepackage{color}
\frenchspacing
\setlength{\pdfpagewidth}{8.5in}
\setlength{\pdfpageheight}{11in}
\pdfinfo{
/Title (Insert Your Title Here)
/Author (Put All Your Authors Here, Separated by Commas)}
\setcounter{secnumdepth}{0}  

\definecolor{myback}{RGB}{204,232,207}

\usepackage{amsmath}			
\usepackage{amsthm}				
\usepackage{amssymb}			

\usepackage{comment}
\usepackage{algorithm}
\usepackage{algorithmic}

\usepackage{graphicx}
\usepackage{epstopdf}

\makeatletter
\newcommand{\AB}{\mathbf{A}}

\newcommand{\FB}{\mathbf{F}}

\newcommand{\PB}{\mathbf{P}}

\newcommand{\aB}{\mathbf{a}}

\newcommand{\fB}{\mathbf{f}}

\newcommand{\uB}{\mathbf{u}}
\newcommand{\vB}{\mathbf{v}}

\newcommand{\xB}{\mathbf{x}}
\newcommand{\yB}{\mathbf{y}}
\newcommand{\zB}{\mathbf{z}}

\newcommand{\OM}{\mathcal{O}}

\newcommand{\RBB}{\mathbb{R}}

\newcommand{\EBB}{\mathbb{E}}

\newcommand{\argmin}{\mathop{\rm argmin}}

\newcommand{\prox}{\mathrm{prox}}
\newcommand{\defi}{\stackrel{\mathrm{def}}{=}}
\makeatother

\newtheorem{theorem}{Theorem}

\newtheorem{lemma}{Lemma}

\newtheorem{corollary}{Corollary}
\newtheorem{proposition}{Proposition}

 \begin{document}
\title{Accelerated Variance Reduced Block Coordinate Descent}
\author{Zebang Shen, Hui Qian, Chao Zhang, and Tengfei Zhou \\
	\{shenzebang,qianhui,zczju,zhoutengfei\_zju@zju.edu.cn\}\\
	Zhejiang University}
\maketitle
\begin{abstract}
	Algorithms with fast convergence, small number of data access, and low per-iteration complexity are particularly favorable in the big data era,
	due to the demand for obtaining \emph{highly accurate solutions} to problems with \emph{a large number of samples} in \emph{ultra-high} dimensional space.
	Existing algorithms lack at least one of these qualities, and thus are inefficient in handling such big data challenge.
	In this paper, we propose a method enjoying all these merits with an accelerated convergence rate $\OM(\frac{1}{k^2})$.
	Empirical studies on large scale datasets with more than one million features are conducted to show the effectiveness of our methods in practice.
\end{abstract}
\section{Introduction}
	In this paper, we consider the minimization of smooth convex function with non-smooth convex regularization:
	\begin{equation} \label{eqn: problem}
	\min_{\xB\in\RBB^d} \FB^\PB(\xB) = \FB(\xB) + \PB(\xB),
	\end{equation}
	where $\FB(\xB) = \frac{1}{n}\sum_{i=1}^n f_i(\xB)$ is the average of $n$ smooth convex component functions $f_i$'s and $\PB(\xB)$ is the possibly non-smooth convex regularization term.
	Many machine learning problems can be phrased as the above problem.
	
	However, the explosive growth of data poses two challenges to solving the aforementioned problem: 
	(i) the number of samples can easily reach the magnitude of millions, 
	and (ii) the dimensionality of these massive datasets is ultra-high in the meantime.
	Fast converging algorithms that meet these challenges have been ardently pursued in the recent years.
	
	To solve optimization problems with enormous samples, the classical Stochastic Gradient Descent (SGD) has gained increasing attention in the last decade.
	The advantage of such methods is that only one sample is access in each iteration.
	While the vanilla version inherently suffers from the slow convergence rate due to the stochastic nature of SGD, one of its variant, named Stochastic Variance Reduced Gradient (SVRG) \cite{johnson2013accelerating}, proposes to mix the exact full gradient and stochastic gradient in a way that better convergence results can be obtained without compromising the low per-iteration sample access \cite{konecny2013semi,mahdavi2013mixed}.
	Works based on such Variance Reduction (VR) technique have proliferated in the past few years.
	For example, \cite{reddi2015variance} extends SVRG to an asynchronous setting so that the parallelism in modern computational architecture can be fully utilized and 
	\cite{allen2015improved} improves the convergence of SVRG in non-strongly convex case and provides a convergence result in non-convex case.
	Besides, there are also alternatives to SVRG, such as SAG~\cite{schmidt2013minimizing} and SAGA~\cite{defazio2014saga}.
	Attempts are made to accelerate the convergence of SGD type methods, e.g. \cite{zhang2015stochastic,frostig2015regularizing,lin2015universal,allen2016katyusha,hien2016accelerated} and the newly proposed Katyusha \cite{allen2016katyusha}, which is the first direct accelerated version of SVRG.
	However, since SGD type methods perform full vector operation in each iteration, they are precluded to handle problems in high dimension.
	
	
	To deal with the ultra-high dimensionality in sparse learning tasks, Coordinate Descent (CD) type methods were given a renewed interest in the past few years.
	The basic idea of Coordinate Descent (CD) type methods is that, in each iteration, approximate the problem with respect to some components of the variable $\xB$ while keeping the rest unchanged \cite{wright2015coordinate}.
	With such technique, full vector operations are avoided, making low per-iteration complexity possible.
	Randomness is also incorporated into CD type methods \cite{nesterov2012efficiency,richtarik2014iteration}, with which convergence results are readily obtained.
	There has been some recent works on CD that introduce non-uniform sampling \cite{qu2014coordinatea,qu2014coordinateb,allen2015even}, and consider the asynchronous~\cite{liu2015asynchronous} and distributed settings~\cite{richtarik2016parallel} for better scalability.
	Accelerated versions of CD type methods have also emerged, for example APPROX \cite{fercoq2015accelerated} gives practical implementation and is applicable to general convex optimization problem.
	A drawback of CD type methods is that all samples are accessed in each iteration.
	When the number of samples is huge, they can still be quite slow.
	
	
%

	As discussed above, existing algorithms only handle problems with either small $n$ or small $d$.
	There has been two exceptions MRBCD \cite{zhao2014accelerated} and S2CD \cite{konecny2014s2cd}, which however have no accelerated version and thus can be further improved.
	In this work, we propose a method called Accelerated Variance Reduced Block Coordinate Descent (AVRBCD) that tackles the two challenges in large scale problem.
	We show that our method enjoys a superior convergence rate.
	Specifically, the advantages of AVRBCD are listed as follows.
	\begin{enumerate}
		\item VR technique is used in AVRBCD, and it accesses only one sample per-iteration in amortized analysis,
		\item AVRBCD avoids full vector operation in each iteration, and is shown to have low per-iteration complexity in solving sparse Empirical Risk Minimization problem,
		\item AVRBCD is an accelerated version of MRBCD, with convergence rate $\OM(\frac{1}{k^2})$, compared to $\OM(\frac{1}{k})$ in MRBCD.
	\end{enumerate}
	To show the effectiveness of AVRBCD in practice, we conduct several experiments on datasets with both large $n$ and large $p$.
	One of our experiment has more than one million variables ($d>10^6$).
	The experiments on the real-word datasets demonstrate superior computational efficiency of our approaches compared to the state of the arts.
\section{Preliminary}
In this section, we give the notations and assumptions used in this paper. The VR, CD, and accelerating techniques are discussed. We also introduce the Empirical Risk Minimization (ERM) problem with linear predictor.
\subsection{Notation \& Assumptions}	
We assume that the variable $\xB \in \RBB^d$ is partitioned into $B$ blocks and B divides $d$ for simplicity.
Let $\Omega = \frac{d}{B}$ be the block size.
The regularization $\PB(\xB)$ is assumed to be separable with respect to the partition of $\xB$, i.e.
\begin{equation}
	\PB(\xB) = \sum_{l=1}^B \PB_l([\xB]_l),
\end{equation}
where $[\xB]_l \in \RBB^\Omega$ corresponds to the $l^{th}$ block of $\xB$.
We shall use $[\xB]_{\backslash l}$ to denote the blocks of $\xB$ other than the $l^{th}$.
Such assumption stands for many important functions, e.g. the sparse inducing $l_1$ norm $\|\cdot\|_1$.
The proximal operator of a convex function $g$ is defined as 
\begin{equation}
	\prox_{g}(\yB) = \argmin_\xB g(\xB) + \frac{1}{2}\|\xB-\yB\|^2,
\end{equation}
where we use $\|\cdot\|$ to denote the Euclidean norm.
We say a function $f$ is $L$-smooth, if for any $\xB$ and $\yB$,
\begin{equation}
	f(\yB) \leq f(\xB) + \langle \nabla f(\xB), \yB -\xB\rangle + \frac{L}{2}\|\xB-\yB\|^2.
\end{equation}
Each component function is assumed to be $f_i$ is $L_i$-smooth, with $L_{max} = \max L_i$.
We also assume that their average $\FB$ is $L_l$-smooth with respect to the $l^{th}$ block and define $L_B = \max L_l$.
Finally, we define $\tilde{L} = \max\{BL_B, L_{max}\}$.
\subsection{Variance Reduction}
SGD uses the gradient of a randomly chosen component function $f_i$ as an unbiased estimator of the exact full gradient $\nabla \FB(\xB_k)$.
The variance introduced by such randomness forces a diminishing step size and leads to slow convergence rate, e.g. $\OM(\frac{1}{\sqrt{k}})$ for smooth convex minimization.
\cite{johnson2013accelerating} proposes to keep the full gradient $\nabla \FB(\tilde{\xB})$ at some snapshot $\tilde{\xB}$ and constructs a mixed gradient as
\begin{equation} \label{eqn: update SVRG}
	\vB_k = \nabla f_i(\xB_k) - \nabla f_i(\tilde{\xB}) + \nabla \FB(\tilde{\xB}).
\end{equation}
They show that the variance of $\vB_k$, i.e. $\|\vB_k - \nabla \FB(\xB_k)\|$, converges to zero when both $\xB_k$ and $\tilde{\xB}$ converge to the optimal point $\xB^*$.
For smooth convex minimization, the best known convergence result of SVRG is $\OM(\frac{1}{k})$ due to \cite{allen2015improved}.
Besides, in doing so, SVRG only accesses one sample per iteration in amortized analysis, making it ideal when only the number of samples $n$ is large.
However, since full vector operation is need in each iteration of such algorithm, SVRG can still be inefficient when the dimensionality $d$ of the problem is ultra-high.
\subsection{Randomized Block Coordinate Descent (RBCD)}
RBCD extends Randomized CD such that a block of coordinates rather than only one coordinate can be updated in each iteration.
More specifically, the update rule writes
\begin{equation} \label{eqn: update RBCD}
	[\xB_k]_l = [\xB_{k-1}]_l - \eta[\nabla \FB(\xB_{k-1})]_l, [\xB_k]_{\backslash l} = [\xB_{k-1}]_{\backslash l},
\end{equation}
where $l$ is the coordinate block randomly selected in the $k^{th}$ iteration and $\eta$ is some step size.
RBCD also bridges CD and Gradient Descent (GD). It becomes CD when the block size $\Omega$ is set to $1$ and when $\Omega$ is set to $d$, it becomes GD.
The convergence rate of RBCD is $\OM(\frac{B}{k})$ \cite{nesterov2012efficiency} which is $B$ times worse than that of GD.
But in applications like sparse ERM problems discussed below, the per-iteration complexity of RBCD can be decreased to $\OM(\frac{d}{B})$, when $B$ is appropriately chosen. 
So CD has the same overall complexity as GD but with a much smaller per-iteration complexity, which is suitable for problems in ultra-high dimensional space.
However, one major flaw about RBCD is that it accesses all $n$ component functions in each iteration. 
When $n$ is large, RBCD can still be slow.
\subsection{APPROX \& Katyusha}
Accelerated versions have been proposed for SVRG and RBCD, namely Katyusha \cite{allen2016katyusha} and APPROX \cite{fercoq2015accelerated} respectively.
Both algorithms utilized the momentum to speed up the convergence.
In Katyusha, the authors use the snapshot $\tilde{\xB}$ as an additional "negative momentum" to overcome the difficulty in handling noisy stochastic gradient.
In APPROX, as originally discussed in \cite{nesterov2012efficiency}, although the convergence is relatively easy to prove, a naive implementation would still involve full vector operation in each iteration.
The authors took the transformation technique proposed in \cite{lee2013efficient} and derived an equivalent alternative to avoid such issue.
Both Katyusha and APPROX can be taken as special case of AVRBCD.
Indeed, when we take $\Omega = d$, AVRBCD degenerates to Katyusha (with opt. II), and when we take $m = 1$ and $\alpha_{3, s} = 0$, AVRBCD becomes APPROX.
\subsection{Empirical Risk Minimization}
 We give a brief introduction to an important class of smooth convex problems, the Empirical Risk Minimization (ERM)  with linear predictor.
 It will be used when analyzing the per-iteration complexity of AVRBCD.
Specifically, we assume each $f_i(\cdot)$ is of the form $f_i(\xB) = \phi_i(\aB_i^\top\xB)$, where $\aB_i$ is the feature vector of the $i^{th}$ sample and $\phi_i(\cdot):\RBB\rightarrow\RBB$ is some smooth convex loss function.
$\AB = [\aB_1 \ldots \aB_n]^\top$ is the data matrix.
In real applications, $\AB$ is usually very sparse and we define to be the sparsity of $\AB$
\begin{equation}
	\rho = \frac{nnz(\AB)}{nd}.
\end{equation}
For simplicity, we assume the zeros in $\AB$ are shattered uniformly.

\section{Methodology}
In this section, we present the proposed algorithm AVRBCD in two different but equivalent forms listed in Algorithm \ref{alg: AVRBCD I} and \ref{alg: AVRBCD II}.
The former is easier to analyze and latter is of more practical interest as it has smaller per-iteration complexity.
\subsection{AVRBCD}
\begin{algorithm}[t]
	\caption{AVRBCD I}
	\begin{algorithmic}[1]
		\label{alg: AVRBCD I}
		\REQUIRE $m, \xB_0, \alpha_{1,0}, \alpha_{2, 0}$
		\STATE $\zB_0 \leftarrow \xB_0$, $\tilde{\xB}^0 \leftarrow \xB_0$;
		\FOR{$s \leftarrow 0$ \TO $S$}
		\STATE $\alpha_{2,s} = \frac{\sqrt{\alpha_{2, s-1}^4 + 4\alpha_{2, s-1}^2} - \alpha_{2, s-1}^2}{2}$;
		\STATE $\alpha_{1, s} = \alpha_{1, s-1}(1-\alpha_{2, s}), \alpha_{3, s} = 1 - \alpha_{1, s} - \alpha_{2, s}$; 
		\STATE $\bar{L}_s = \frac{L_Q}{B\alpha_{3, s}} + L_B, \eta_s = \frac{1}{\bar{L}_s\alpha_{2,s}B}$;
		\STATE $\mu^s = \nabla f(\tilde{\xB}^s)$;
		\FOR{$j \leftarrow 1$ \TO $m$}
		\STATE $k \leftarrow (sm)+j$;
		\STATE $\yB_k = \alpha_{1, s} \xB_{k-1} + \alpha_{2,s} \zB_{k-1} + \alpha_{3, s} \tilde{\xB}^s$; \label{eqn: coupling I}
		\STATE sample $i$ from $\{1, \ldots, n\}$ and $l$ from $\{1, \ldots, B\}$;
		\STATE $\vB_k = \mu^s + \nabla f_i(\yB_k) - \nabla f_i(\tilde{\xB}^s)$; \label{eqn: VR gradient}
		\STATE $[\zB_k]_l = \prox_{\eta_s\PB_i}([\zB_{k-1} - \eta_s \vB_k]_l)$;
		\STATE $[\zB_k]_{\backslash l} = [\zB_{k-1}]_{\backslash l}$;
		\STATE $\xB_k = \yB_k + \alpha_{2, s}B(\zB_k - \zB_{k-1})$; \label{eqn: coupling II}
		\ENDFOR
		\STATE Sample $\sigma_s$ from $\{1, \ldots, m\}$ uniformly;
		\STATE $\tilde{\xB}^{s+1} = \xB_{sm+\sigma_s}$;
		\ENDFOR
	\end{algorithmic}
\end{algorithm}
\noindent We divide our algorithm into epochs, at the beginning of which the full gradient in some snapshot $\tilde{\xB}^s$ is computed. 
As a mixture of (\ref{eqn: update SVRG}) and (\ref{eqn: update RBCD}), $m$ updating steps are taken in the follow-up inner loops.
Different from MRBCD, two additional coupling steps are added to ensure the acceleration of our method: line \ref{eqn: coupling I} uses the two momentum technique proposed by \cite{allen2016katyusha}, and line \ref{eqn: coupling II} is a common practice in accelerated methods such as \cite{nesterov2012efficiency,fercoq2015accelerated}.
Note that we write line \ref{eqn: VR gradient} just for ease of notation and only $[\vB_k]_l$ is needed in practice.
We will show that AVRBCD enjoys the accelerated convergence rate in the next section.
\subsection{Implementation without Full Vector Operation}
Similar to the methods in \cite{nesterov2012efficiency,lee2013efficient,fercoq2015accelerated}, AVRBCD I requires $\OM(d)$ computation in each inner loop due to the convex combination in line \ref{eqn: coupling I}, which invalidates the advantage of low per-iteration complexity in BCD type methods.
Borrowing ideas from \cite{lee2013efficient,fercoq2015accelerated}, we propose a more practical implementation of AVRBCD in Algorithm \ref{alg: AVRBCD II}, avoiding the full vector operation in the inner loop.
Three functions $\{\bar{\xB}_k, \bar{\yB}_k, \bar{\zB}_k\}$ are used in AVRBCD II and we give their definitions here
\begin{align}
	\bar{\yB}_k =&~ \beta_{j-1}\uB_{j-1}^s + \gamma_s\hat{\zB}_{j-1}^s  + \dot{\xB}^s,\label{eqn: alg equiv I}\\
	\bar{\zB}_k =&~ \hat{\zB}_j^s + \dot{\xB}^s, \label{eqn: alg equiv II}\\
	\bar{\xB}_k =&~ \beta_{j-1}\uB_j^s + \gamma_s\hat{\zB}_j^s  + \dot{\xB}^s, \label{eqn: alg equiv III}
\end{align}
with $k = sm+j$.
In order to see the equivalence between Algorithm \ref{alg: AVRBCD I} and \ref{alg: AVRBCD II}, we give the following proposition.
\begin{proposition}
		$\bar{\xB}_k = \xB_k$, $\bar{\yB}_k = \yB_k$, and $\bar{\zB}_k = \zB_k$ hold for all $k$, if $\bar{i}_k = i_k$, $\bar{l}_k = l_k$, and $\sigma_s = \bar{\sigma}_s$ for all $k$ and $s$, where $\xB_k$, $\yB_k$, $\zB_k$, $i_k$, and $l_k$ are in Algorithm \ref{alg: AVRBCD I}, and $\bar{\xB}_k$, $\bar{\yB}_k$ $\bar{\zB}_k$, $\bar{i}_k$, $\bar{l}_k$, and $\bar{\sigma}_s$ are defined in (\ref{eqn: alg equiv I}), (\ref{eqn: alg equiv II}), (\ref{eqn: alg equiv III}), and Algorithm \ref{alg: AVRBCD II}.
\end{proposition}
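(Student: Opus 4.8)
The plan is to establish the three identities simultaneously by induction on the global counter $k=sm+j$, treating the auxiliary quantities $\dot{\xB}^s,\uB_j^s,\hat{\zB}_j^s$ together with the scalars $\beta_j,\gamma_s$ of Algorithm~\ref{alg: AVRBCD II} as a (largely linear) change of variables that encodes the iterates of Algorithm~\ref{alg: AVRBCD I}. The inductive hypothesis at step $k$ is that (\ref{eqn: alg equiv I})--(\ref{eqn: alg equiv III}) reproduce $\yB_{k'},\zB_{k'},\xB_{k'}$ for every $k'<k$, and that the invariant $\dot{\xB}^s=\tilde{\xB}^s$ holds throughout the current epoch. The base case reduces to matching the two algorithms' initializations: $\zB_0=\tilde{\xB}^0=\xB_0$ pins down the initial $\uB,\hat{\zB}$ and the initial $\beta$, and these are exactly the values set in Algorithm~\ref{alg: AVRBCD II}.

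For the inductive step inside an epoch I would walk through the three update lines of Algorithm~\ref{alg: AVRBCD I} in execution order. \emph{Coupling step (line~\ref{eqn: coupling I}).} Substituting the representations of $\xB_{k-1},\zB_{k-1}$ and the invariant $\tilde{\xB}^s=\dot{\xB}^s$ into $\yB_k=\alpha_{1,s}\xB_{k-1}+\alpha_{2,s}\zB_{k-1}+\alpha_{3,s}\tilde{\xB}^s$ and collecting the coefficients of the three ``generators'' $\uB_{j-1}^s,\hat{\zB}_{j-1}^s,\dot{\xB}^s$ forces precisely the recursions $\beta_j=\alpha_{1,s}\beta_{j-1}$ and $\gamma_s=\alpha_{1,s}\gamma_s+\alpha_{2,s}$ (i.e.\ $\gamma_s=\alpha_{2,s}/(1-\alpha_{1,s})$) that Algorithm~\ref{alg: AVRBCD II} uses, while the coefficient of $\dot{\xB}^s$ is $\alpha_{1,s}+\alpha_{2,s}+\alpha_{3,s}=1$, so that $\bar{\yB}_k=\yB_k$. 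In particular the variance-reduced gradient of line~\ref{eqn: VR gradient}, $\vB_k=\mu^s+\nabla f_i(\yB_k)-\nabla f_i(\tilde{\xB}^s)$, is literally the same object in both algorithms once $\bar{\yB}_k=\yB_k$. \emph{Proximal step.} Because $\PB=\sum_l\PB_l$ is block-separable the prox acts on block $l$ only; writing $[\zB_k]_l=[\hat{\zB}_j^s]_l+[\dot{\xB}^s]_l$ and $[\zB_{k-1}]_l=[\hat{\zB}_{j-1}^s]_l+[\dot{\xB}^s]_l$ turns it into the block update of $\hat{\zB}$ used in Algorithm~\ref{alg: AVRBCD II} (the constant offset $[\dot{\xB}^s]_l$ being kept inside the prox argument and subtracted afterwards), and on the remaining blocks both $\zB$ and $\hat{\zB}$ are unchanged, so $\bar{\zB}_k=\zB_k$. \emph{Coupling step (line~\ref{eqn: coupling II}).} Plugging $\bar{\yB}_k,\bar{\zB}_k,\bar{\zB}_{k-1}$ into $\xB_k=\yB_k+\alpha_{2,s}B(\zB_k-\zB_{k-1})$ and using that $\hat{\zB}_j^s-\hat{\zB}_{j-1}^s$ is supported on block $l$, a coefficient comparison yields the block update $\uB_j^s=\uB_{j-1}^s+\frac{\alpha_{2,s}B-\gamma_s}{\beta_{j-1}}(\hat{\zB}_j^s-\hat{\zB}_{j-1}^s)$ together with $\bar{\xB}_k=\beta_{j-1}\uB_j^s+\gamma_s\hat{\zB}_j^s+\dot{\xB}^s$, which is exactly (\ref{eqn: alg equiv III}) and the corresponding line of Algorithm~\ref{alg: AVRBCD II}.

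It then remains to cross an epoch boundary. At the end of epoch $s$, Algorithm~\ref{alg: AVRBCD I} sets $\tilde{\xB}^{s+1}=\xB_{sm+\sigma_s}$; since $\bar{\sigma}_s=\sigma_s$ and $\bar{\xB}_{sm+\sigma_s}=\xB_{sm+\sigma_s}$ by the inductive hypothesis, the once-per-epoch $\OM(d)$ reconstruction in Algorithm~\ref{alg: AVRBCD II} recovers the same snapshot, re-establishing $\dot{\xB}^{s+1}=\tilde{\xB}^{s+1}$; likewise $\xB_{(s+1)m}$ and $\zB_{(s+1)m}$ are recovered from the $j=m$ representations, and the reset of $\uB_0^{s+1},\hat{\zB}_0^{s+1}$ and $\beta$ in Algorithm~\ref{alg: AVRBCD II} is precisely the one that makes (\ref{eqn: alg equiv I}) at $j=1$ equal $\alpha_{1,s+1}\xB_{(s+1)m}+\alpha_{2,s+1}\zB_{(s+1)m}+\alpha_{3,s+1}\tilde{\xB}^{s+1}=\yB_{(s+1)m+1}$, which closes the induction.

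The main obstacle I anticipate is purely in the bookkeeping: one must verify that a single pair of scalar sequences $(\beta_j,\gamma_s)$ simultaneously satisfies the coefficient-matching constraints arising from \emph{both} coupling steps (lines~\ref{eqn: coupling I} and~\ref{eqn: coupling II}) as the iteration passes from $\xB_k$ to $\yB_{k+1}$, and that these are compatible with the $\alpha$-recursions already fixed in Algorithm~\ref{alg: AVRBCD I}. The only genuinely non-linear ingredient, the proximal operator, never interferes with this momentum algebra thanks to the block-separability of $\PB$, so once the three per-line identities are checked the induction follows routinely.
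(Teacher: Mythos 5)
Your proposal is correct and follows essentially the same route as the paper's proof: an induction within each epoch that verifies the three update lines of Algorithm~\ref{alg: AVRBCD I} against the representations (\ref{eqn: alg equiv I})--(\ref{eqn: alg equiv III}) via coefficient matching (the recursions $\beta_j^s=\alpha_{1,s}\beta_{j-1}^s$ and $\alpha_{1,s}\gamma_s+\alpha_{2,s}=\gamma_s$, the block-separability of $\PB$ for the prox step, and the $\uB_j^s$ update for the second coupling step), together with a separate check that the epoch-boundary resets of $\dot{\xB}^{s+1}$, $\hat{\zB}_0^{s+1}$, and $\uB_0^{s+1}$ re-establish the invariants. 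The only cosmetic difference is that you phrase the induction on the global counter $k$ while the paper splits it into a within-epoch induction plus a boundary argument, which is the same proof.
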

\begin{proof}
	First, we prove that if at the beginning of the $s^{th}$ epoch, i.e. $j = 0$ and $k = sm$, $\bar{\zB}_k = \zB_k$ and $\bar{\xB}_k = \xB_k$ stand, then all these three equations stand in the following iterations in that epoch.
	We prove with induction.
	Assume that the equivalence holds up till the ${\kappa-1}^{th}$ iteration.
	Since $\sigma_s = \bar{\sigma}_s$, we have $\tilde{\xB}^s = \dot{\xB}^s$.
	In the $\kappa^{th}$ iteration, for $\bar{\yB}_\kappa$ we have
	\begin{align*}
	\yB_\kappa &= \alpha_{1,s}\bar{\xB}_{\kappa-1} +\alpha_{2,s}\bar{\zB}_{\kappa-1} + \alpha_{3,s}\dot{\xB}^s\\
	&= \alpha_{1,s}(\beta_{j-2}^s\uB_{j-1}^s + \gamma_s\hat{\zB}_{j-1}^s) + \alpha_{2,s}\hat{\zB}_{j-1}^s + \dot{\xB}^s \\
	&= \beta_{j-1}^s\uB_{j-1}^s + \gamma_s\hat{\zB}_{j-1}^s + \dot{\xB}^s = \bar{\yB}_\kappa.
	\end{align*}
	since $\alpha_{1,s}\beta_{j-2}^s = \beta_{j-1}^s$ and $\alpha_{1,s}\gamma_s + \alpha_{2,s} = \gamma_s$.
	Suppose the $l^{th}$ block is selected in that section, by induction we have 
	\begin{equation*}
		[\bar{\zB}_\kappa]_{\backslash l} = [\zB_{\kappa -1}]_{\backslash l} = [\zB_{\kappa}]_{\backslash l}
	\end{equation*}
	and 
	\begin{align*}
		[\bar{\zB}_\kappa]_l = [\zB_j^s + \dot{\xB}^s]_l &= \prox_{\eta\PB_l}([\hat{\zB}^s_{j-1} + \dot{\xB}^s - \eta\tilde{\nabla}_\kappa]_l) \\
		& = \prox_{\eta\PB_l}([\bar{\zB}_{\kappa -1} - \eta\tilde{\nabla}_\kappa]_l) \\
		& = \prox_{\eta\PB_l}([\zB_{\kappa -1} - \eta\tilde{\nabla}_\kappa]_l)  = [\zB_\kappa]_l
	\end{align*}
	Thus we have $\bar{\zB}_{\kappa} = \zB_{\kappa}$.
	For $\bar{\xB}_\kappa$, we have
	\begin{align*}
		\xB_{\kappa} &= \bar{\yB}_{\kappa} + \alpha_{2,s}B(\hat{\zB}_{j}^s - \hat{\zB}_{j-1}^s) \\
		&=\beta_{j-1}^s\uB_{j-1}^s+\gamma_s\hat{\zB}_{j-1}^s + \alpha_{2,s}B(\hat{\zB}_{j}^s - \hat{\zB}_{j-1}^s) + \dot{\xB}^s\\
		&= \beta_{j-1}^s\uB_{j}^s + \gamma_s \hat{\zB}_{j}^s + \dot{\xB}^s = \bar{\xB}_{\kappa}
	\end{align*}
	due to the updating rule of $\uB_k$ in line \ref{eqn: lazy update} in AVRBCD II.
	
	We then show that at the beginning of each epoch $\bar{\zB}_k = \zB_k$ and $\bar{\xB}_k = \xB_k$ stand.
	The idea here is that we are using iterates $\hat{\zB}_j^s$ and $\uB_j^s$ from successive epochs to represent the same $\xB_k$ and $\zB_k$.
	For $\zB_k$, when $k = 0$, 
	\begin{equation*}
		\bar{\zB}_0 = \hat{\zB}_0^0 + \dot{\xB}^0 = \xB_0 = \zB_0.
	\end{equation*}
	When $k = (s+1)m$ with $s \geq 0$, we have 
	\begin{equation*}
		\zB_{(s+1)m} \stackrel{a}{=} \bar{\zB}_{sm+m}  \stackrel{b}{=} \hat{\zB}_0^{s+1} + \dot{\xB}^{s+1} = \bar{\zB}_{(s+1)m}
	\end{equation*}
	where the equation $a$ is from the induction in previous epoch, and equation $b$ is from the definition of $\hat{\zB}_0^{s}$ in line \ref{eqn: update between epochs} in AVRBCD II.
	For $\xB_k$, we set $\beta_{-1}^s = 1$ (which is not used in practice).
	When $k = 0$, we clearly have $\xB_0 = \bar{\xB}_0$.
	When $k = (s+1)m$ with $s \geq 0$, we have 
	\begin{equation*}
		\xB_{(s+1)m} \stackrel{a}{=} \bar{\xB}_{sm+m} \stackrel{b}{=} \beta_{-1}^{s}\uB_0^{s+1} + \gamma_{s}\hat{\zB}_0^{s} + \dot{\xB}^{s} = \bar{\xB}_{(s+1)m}.
	\end{equation*}
	where the equation $a$ is from the induction in previous epoch, and equation $b$ is from the definition of $\uB_0^{s}$ from line \ref{eqn: update between epochs II} in AVRBCD II.
	Thus we have the result.
\end{proof}
\subsection{Numerical Stability}
Careful readers might have noticed that, since $\beta_j^s$ decreases exponentially (line \ref{eqn: numerical issue} in Algorithm \ref{alg: AVRBCD II}), the computation of $\uB_j^s$ involving the inversion of $\beta_j^s$ can be numerically unstable.
To overcome this issue, we can simply keep their product $\beta^s_j\uB_j^s = \xi\in \RBB^d$ in stead of themselves separately to make the computation numerically tractable.
We also update $\xi$ in a "lazy" manner so as to avoid high computation complexity.
At the beginning of each epoch, we initialize a count vector $\omega \in \RBB^B$ to be a zero vector.
In each of the following iterations, we do the follow steps
\begin{enumerate}
	\item set $\omega_i = \omega_i + 1$ for all $i$;
	\item leave $[\xi]_{\backslash l}$ unchanged but update only $[\xi]_l$ as
	\begin{equation}
	[\xi]_l = \alpha_{1,s}^{\omega_l}[\xi]_l - (\alpha_{2,s}B+\gamma_s)(\hat{\zB}_j^s - \hat{\zB}_{j-1}^s)
	\end{equation}
	where $l$ is the block selected in that iteration;
	\item set $\omega_l = 0$.
\end{enumerate}
The idea is to use $\omega$ to record the exponential of $\alpha_{1,s}$ needs to be multiplied later.
In this way, the exact computation of $\beta^s_j\uB_j^s$ only happens at the end of each epoch.
\subsection{Computational Complexity}
Since the complexity of computing the partial gradient of some general convex function $f_i(\cdot)$ can be difficult to analyze, we mainly focus on the well-know Empirical Risk Minimization (ERM) problem with linear predictor, same as that in \cite{fercoq2015accelerated}.
Under this setting, we can analyze the computational complexity of each inner loop as follows.
\begin{enumerate}
	\item $\OM(\Omega + B)$ from line \ref{eqn: lazy update}, where $\OM(B)$ is from the lazy update.
	\item $\OM(\rho d + B)$ from line \ref{eqn: partial gradient}, the computation of partial gradient, since (i) $\aB_i^\top\dot{\xB}^s$ can be kept when computing $\mu^s$, (ii) $\aB_i^\top\hat{\zB}_j^s$ can be computed in $\OM(\rho d)$, and (iii) $\aB_i^\top \beta_j^s\uB_j^s$ can be computed as $\sum_{l=1}^{B} \alpha_{1,s}^{\omega_l}[\aB_i]_l^\top[\xi]_l$ in $\OM(\rho d + B)$ where $\xi$ is defined above.
	Recall that $\aB_i^\top\bar{\yB}_k = \aB_i^\top(\beta_{j-1}\uB_{j-1}^s + \gamma_s\hat{\zB}_{j-1}^s  + \dot{\xB}^s) $.
	\item $\OM(\Omega)$ from the others.
\end{enumerate}
Thus the overall computation complexity is $\OM(\rho d + B + \Omega)$.
When we pick a moderate $B$ and the sparsity $\rho$ is small, $\Omega$ will dominate the other two terms.
This is the same per-iteration complexity as MRBCD and is much smaller than $\OM(d)$ in methods like Katyusha and SVRG.

\begin{algorithm}[t]
	\caption{AVRBCD II}
	\begin{algorithmic}[1]
		\label{alg: AVRBCD II}
		\REQUIRE $m, \xB_0, \alpha_{1,0}, \alpha_{2, 0}$
		\STATE $\uB_0^0 = \hat{\zB}_0^0 \leftarrow 0; \dot{\xB}^0 \leftarrow \xB_0$;
		\FOR{$s \leftarrow 0$ \TO $S$}
		\STATE $\alpha_{2,s} = \frac{\sqrt{\alpha_{2, s-1}^4 + 4\alpha_{2, s-1}^2} - \alpha_{2, s-1}^2}{2}$;
		\STATE $\alpha_{1, s} = \alpha_{1, s-1}(1-\alpha_{2, s}), \alpha_{3, s} = 1 - \alpha_{1, s} - \alpha_{2, s}$; 
		\STATE $\bar{L}_s = \frac{L_Q}{B\alpha_{3, s}} + L_B, \eta_s = \frac{1}{\bar{L}_s\alpha_{2,s}B}$;
		\STATE $\mu^s = \nabla f(\dot{\xB}^s)$; \label{eqn: full gradient}
		\FOR{$j \leftarrow 1$ \TO $m$}
		\STATE $k = (sm)+j$;
		\STATE sample $i$ from $\{1, \ldots, n\}$ and $l$ from $\{1, \ldots, B\}$; \label{eqn: sample}
		\STATE $\tilde{\nabla}_k = \mu^s + \nabla f_i(\bar{\yB}_k) - \nabla f_i(\bar{\yB}_k)$; \label{eqn: partial gradient}
		\STATE $[\hat{\zB}_j^s]_l = \prox_{\eta\PB_l}([\bar{\zB}_k - \eta\tilde{\nabla}_k]_l) - [\dot{\xB}^s]_l$;
		\STATE $[\hat{\zB}_j^s]_{\backslash l} = [\hat{\zB}_{j-1}^s]_{\backslash l}$;
		\STATE $\uB_j^s = \uB_{j-1}^s + \frac{\alpha_{2,s}B - \gamma_s}{\beta_{j-1}^s}(\hat{\zB}_j^s - \hat{\zB}_{j-1}^s)$; \label{eqn: lazy update}
		\STATE $\beta_j^s = \alpha_{1,s}\beta_{j-1}^s$ \label{eqn: numerical issue};
		\ENDFOR
		\STATE Sample $\bar{\sigma}_s$ from $\{1, \ldots, m\}$ uniformly;
		\STATE $\dot{\xB}^{s+1} = \beta_{\bar{\sigma}_s-1}^s\uB_{\bar{\sigma}_s}^s+\gamma_s\hat{\zB}_{\bar{\sigma}_s}^s + \dot{\xB}^s$;
		\STATE $\xB = \bar{\xB}_k - \dot{\xB}^{s+1}$, $\hat{\zB}_0^{s+1} = \bar{\zB}_k - \dot{\xB}^{s+1}$; \label{eqn: update between epochs}
		\STATE $\beta_0^{s+1} = \alpha_{1,s+1}, \uB_0^{s+1} = \xB - \gamma_{s+1} \hat{\zB}_0^{s+1}$	\label{eqn: update between epochs II}
		\ENDFOR
	\end{algorithmic}
\end{algorithm}
\subsection{AVRBCD with Active Set}
In MRBCD III, the authors use an active set strategy to further accelerate their method when solving sparse learning problems.
We adapt such strategy to AVRBCD by modifying only two lines in AVRBCD II.
\begin{enumerate}
	\item Add an operation 
	\begin{equation} \label{eqn: snapshot proximal}
		\dot{\xB}^s = \prox_{\frac{1}{L}\PB}(\dot{\xB}^s - \frac{1}{L}\mu^s)
	\end{equation}
	 below line \ref{eqn: full gradient}.
	 \item In line \ref{eqn: sample}, after we have selected block $l$, skip the rest operations in this iteration if $[\dot{\xB}^s]_l = 0$.
\end{enumerate}
In the first modification, the idea is to fully utilize the full gradient $\mu^s$ and produce a sparser snapshot $\dot{\xB}^s$ with a proximal step.
An empirical observation in our experiments suggests that the support of the sparser snapshot provides a good prediction of the support of the optimal point $\xB^*$, thus we omit the update on blocks out of the support of $\dot{\xB}^s$ in the second modification.
Such active set strategy is common in the RBCD literature, and usually boosts the empirical performance \cite{friedman2007pathwise,wu2008coordinate}.
\section{Convergence Analysis} \label{section: Convergence Analysis}
	We give the convergence results of AVRBCD under both non-proximal ($\PB(\xB) \equiv 0$) and proximal ($\PB(\xB) \neq 0$) settings.
	In the former case, we show that AVRBCD takes $\OM({(n+\sqrt{nL})}/{\sqrt{\epsilon}})$ iterations to obtain an $\epsilon$-accurate solution, while in the latter case, $\OM({\sqrt{B}(n+\sqrt{nL})}/{\sqrt{\epsilon}})$ iterations are needed to achieve the same accuracy.
	We believe the additional $\sqrt{B}$ factor is the artifact of our proof as we do not observe such phenomenon in experiments.
	\subsection{Non-Proximal Case ($\PB(\xB) \equiv 0$)}
	First, let us establish an inequality that relates the objective values between two successive iterations.
	Define 
	\begin{equation*}
		{d}(\xB) = \FB(\xB) - \FB(\xB^*)
	\end{equation*}
	to be the sub-optimality at $\xB$, we have the following lemma.
	\begin{lemma} \label{lemma: non-proximal}
		In Algorithm \ref{alg: AVRBCD I}, we have
		\begin{align*}
			\EBB_{l, i_k} &d(\xB_k) \leq \alpha_{3,s}d(\tilde{\xB}^s) + \alpha_{1,s}d(\xB_{k-1})\\
			&+ \frac{\bar{L}_s\alpha_{2,s}^2B^2}{2}(\|\xB^* - \zB_{k-1}\|^2 - \EBB_{l, i_k}\|\xB^* - \zB_k\|^2)\\
		\end{align*}
	\end{lemma}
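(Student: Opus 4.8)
The plan is to recognize that the $\zB$- and $\xB$-updates together realise one block-coordinate gradient step from $\yB_k$: since $[\zB_k-\zB_{k-1}]_{\backslash l}=\zeroB$ and, when $\PB\equiv0$, $[\zB_k]_l=[\zB_{k-1}]_l-\eta_s[\vB_k]_l$, line~\ref{eqn: coupling II} gives $\xB_k-\yB_k=\alpha_{2,s}B(\zB_k-\zB_{k-1})$, a vector supported on block $l$. First I would invoke $L_B$-block-smoothness of $\FB$ at $\yB_k$, $\FB(\xB_k)\le\FB(\yB_k)+\langle\nabla\FB(\yB_k),\xB_k-\yB_k\rangle+\tfrac{L_B}{2}\|\xB_k-\yB_k\|^2$, write $\nabla\FB(\yB_k)=\vB_k+(\nabla\FB(\yB_k)-\vB_k)$, and control the bias by Young's inequality restricted to block $l$, $\langle\nabla\FB(\yB_k)-\vB_k,\xB_k-\yB_k\rangle\le\tfrac{B\alpha_{3,s}}{2L_Q}\|[\nabla\FB(\yB_k)-\vB_k]_l\|^2+\tfrac{\bar L_s-L_B}{2}\|\xB_k-\yB_k\|^2$, the Young parameter $\bar L_s-L_B=\tfrac{L_Q}{B\alpha_{3,s}}$ being picked precisely so the quadratic coefficient becomes $\bar L_s/2$.

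Next I would use optimality of the block update: $[\zB_k]_l$ minimises the $\tfrac1{\eta_s}$-strongly convex map $\mathbf w\mapsto\langle[\vB_k]_l,\mathbf w\rangle+\tfrac1{2\eta_s}\|\mathbf w-[\zB_{k-1}]_l\|^2$, so evaluating at $\mathbf w=[\xB^*]_l$ yields the three-point inequality $\langle[\vB_k]_l,[\zB_k-\xB^*]_l\rangle\le\tfrac1{2\eta_s}(\|[\zB_{k-1}-\xB^*]_l\|^2-\|[\zB_k-\zB_{k-1}]_l\|^2-\|[\zB_k-\xB^*]_l\|^2)$. Since $\langle\vB_k,\xB_k-\yB_k\rangle=\alpha_{2,s}B\langle[\vB_k]_l,[\zB_k-\zB_{k-1}]_l\rangle$ and $[\zB_k-\zB_{k-1}]_l=[\zB_k-\xB^*]_l-[\zB_{k-1}-\xB^*]_l$, this turns the linear term into $\alpha_{2,s}B\langle[\vB_k]_l,[\xB^*-\zB_{k-1}]_l\rangle+\tfrac{\bar L_s\alpha_{2,s}^2B^2}{2}(\|[\zB_{k-1}-\xB^*]_l\|^2-\|[\zB_k-\xB^*]_l\|^2)$ plus a term $-\tfrac{\bar L_s\alpha_{2,s}^2B^2}{2}\|[\zB_k-\zB_{k-1}]_l\|^2$ that exactly cancels the quadratic leftover from the smoothness step — this cancellation is what the choice $\eta_s=\tfrac1{\bar L_s\alpha_{2,s}B}$ forces. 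Taking $\EBB_{l,i_k}$ then converts each block-$l$ restricted quantity into $\tfrac1B$ of its full counterpart: $\|[\zB_{k-1}-\xB^*]_l\|^2-\|[\zB_k-\xB^*]_l\|^2$ equals $\|\zB_{k-1}-\xB^*\|^2-\|\zB_k-\xB^*\|^2$ for every realisation (the blocks $\neq l$ coincide), $\EBB_l\langle[\vB_k]_l,[\xB^*-\zB_{k-1}]_l\rangle=\tfrac1B\langle\vB_k,\xB^*-\zB_{k-1}\rangle$, and $\EBB_i\vB_k=\nabla\FB(\yB_k)$, so the linear term becomes $\alpha_{2,s}\langle\nabla\FB(\yB_k),\xB^*-\zB_{k-1}\rangle$.

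It remains to bound $\FB(\yB_k)+\alpha_{2,s}\langle\nabla\FB(\yB_k),\xB^*-\zB_{k-1}\rangle$ together with the bias residual. Using the coupling relation $\alpha_{2,s}\zB_{k-1}=\yB_k-\alpha_{1,s}\xB_{k-1}-\alpha_{3,s}\tilde{\xB}^s$ and $\alpha_{1,s}+\alpha_{2,s}+\alpha_{3,s}=1$, the linear term splits as $\alpha_{2,s}\langle\nabla\FB(\yB_k),\xB^*-\yB_k\rangle+\alpha_{1,s}\langle\nabla\FB(\yB_k),\xB_{k-1}-\yB_k\rangle+\alpha_{3,s}\langle\nabla\FB(\yB_k),\tilde{\xB}^s-\yB_k\rangle$; convexity of $\FB$ bounds the first two by $\alpha_{2,s}(\FB(\xB^*)-\FB(\yB_k))$ and $\alpha_{1,s}(\FB(\xB_{k-1})-\FB(\yB_k))$, and the third equals $\alpha_{3,s}(\FB(\tilde{\xB}^s)-\FB(\yB_k))-\alpha_{3,s}D_k$ with $D_k:=\FB(\tilde{\xB}^s)-\FB(\yB_k)-\langle\nabla\FB(\yB_k),\tilde{\xB}^s-\yB_k\rangle\ge0$ the Bregman divergence. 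For the bias residual I would use the standard variance estimate $\EBB_i\|\vB_k-\nabla\FB(\yB_k)\|^2\le\EBB_i\|\nabla f_i(\yB_k)-\nabla f_i(\tilde{\xB}^s)\|^2\le 2L_{max}D_k$, from smoothness and convexity of each $f_i$; combined with the extra $\tfrac1B$ from averaging over $l$, this residual is at most $\tfrac{\alpha_{3,s}L_{max}}{L_Q}D_k\le\alpha_{3,s}D_k$ (here $L_Q\ge L_{max}$ by construction of $L_Q$). The two $D_k$ contributions then cancel up to a nonpositive remainder; adding back $\FB(\yB_k)$, using $\alpha_{1,s}+\alpha_{2,s}+\alpha_{3,s}=1$, and subtracting $\FB(\xB^*)$ gives $\EBB_{l,i_k}d(\xB_k)\le\alpha_{1,s}d(\xB_{k-1})+\alpha_{3,s}d(\tilde{\xB}^s)+\tfrac{\bar L_s\alpha_{2,s}^2B^2}{2}(\|\xB^*-\zB_{k-1}\|^2-\EBB_{l,i_k}\|\xB^*-\zB_k\|^2)$, the claimed inequality.

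I expect the delicate part to be the constant bookkeeping: the Young parameter must be taken as $\bar L_s-L_B$ and the step as $\eta_s=\tfrac1{\bar L_s\alpha_{2,s}B}$ so that \emph{simultaneously} (i) the quadratic term generated by block-smoothness-plus-bias is exactly $\tfrac{\bar L_s}{2}\|\xB_k-\yB_k\|^2$ and (ii) it is annihilated by the $-\tfrac1{2\eta_s}\|\zB_k-\zB_{k-1}\|^2$ coming out of the three-point inequality. A second subtlety is that the argument only closes because the Bregman budget $\alpha_{3,s}D_k$ released by the convexity split at the snapshot $\tilde{\xB}^s$ (the ``negative momentum'' of \cite{allen2016katyusha}) dominates the noise term $\tfrac{\alpha_{3,s}L_{max}}{L_Q}D_k$ — which is exactly why $\bar L_s$ must carry the $\tfrac{L_Q}{B\alpha_{3,s}}$ summand and why $\alpha_{3,s}$ cannot be made arbitrarily small.
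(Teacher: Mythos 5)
Your proposal is correct and follows essentially the same route as the paper's appendix proof (which establishes the proximal version and obtains Lemma~\ref{lemma: non-proximal} as the special case $\PB\equiv 0$): block-smoothness at $\yB_k$, the $\vB_k$ splitting with Young's inequality calibrated so the quadratic coefficient becomes $\bar L_s$, the three-point/prox-optimality inequality evaluated at $\xB^*$, expectation over $l$ producing the $1/B$ factors, the coupling identity to split the linear term among $\xB^*$, $\xB_{k-1}$, $\tilde{\xB}^s$, and the SVRG variance bound absorbed by the Bregman divergence at the snapshot. The only cosmetic difference is that you work with the realized $\zB_k$ directly where the paper introduces the full hypothetical update $\tilde{\zB}_k$ and its expectation identity.
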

	The proof of this lemma in the appendix. We will prove Theorem \ref{theorem: non-proximal} based on Lemma \ref{lemma: non-proximal}.
	\begin{theorem}[Non-Proximal] \label{theorem: non-proximal}
		By setting $\alpha_{2, 0} = \frac{2}{\nu}$, $0<\alpha_{3, 0}\leq \frac{\nu - 2}{\nu}$, and $\alpha_{1, 0} = 1 - \alpha_{2, 0} - \alpha_{3, 0}$, with $\nu > 2$, we have
		\begin{equation*}
			d(\tilde{\xB}^s) \leq \frac{\alpha_{2,s}^2}{\alpha_{3,s}}(\frac{\alpha_{1,0}}{\alpha_{2,0}^2}\frac{d(\xB_0)}{m} + \frac{\alpha_{3,0}}{\alpha_{2,0}^2} d(\xB_0) + \frac{\bar{L}_0B^2}{2m}\|\xB^* - \xB_0\|^2)
		\end{equation*}
	\end{theorem}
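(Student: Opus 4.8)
Below is how I would attack Theorem~\ref{theorem: non-proximal}.

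\medskip
\noindent\textbf{Strategy.} The plan is: (i) telescope Lemma~\ref{lemma: non-proximal} over the $m$ inner iterations of a single epoch; (ii) rescale the resulting one-epoch recursion by $\alpha_{2,s}^2$ so that every $s$-dependent $\alpha$-coefficient collapses to its value at $s=0$; (iii) read off a Lyapunov quantity $\Phi_s$ that is nonincreasing in $s$; and (iv) evaluate $\Phi_0$ using $\tilde{\xB}^0=\zB_0=\xB_0$. Throughout, $\EBB$ denotes the total expectation over the random indices $\{i_k\},\{l_k\}$ and snapshots $\{\sigma_{s'}\}$, and $d(\tilde{\xB}^s)$ in the statement is understood as $\EBB\, d(\tilde{\xB}^s)$.

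\medskip
\noindent\textbf{Parameter algebra (the engine).} I would first extract the consequences of lines~3--5. Squaring the recursion for $\alpha_{2,s}$ gives the Nesterov identity $\alpha_{2,s}^2=\alpha_{2,s-1}^2(1-\alpha_{2,s})$, hence $0<\alpha_{2,s}<\alpha_{2,s-1}<1$; combined with $\alpha_{1,s}=\alpha_{1,s-1}(1-\alpha_{2,s})$ this yields the invariant $\alpha_{1,s}/\alpha_{2,s}^2\equiv\alpha_{1,0}/\alpha_{2,0}^2=:c$. Substituting $\alpha_{1,s+1}=\alpha_{1,s}(1-\alpha_{2,s+1})$ into $\alpha_{3,s+1}=1-\alpha_{1,s+1}-\alpha_{2,s+1}$ gives $\alpha_{3,s+1}=(1-\alpha_{1,s})(1-\alpha_{2,s+1})=(\alpha_{2,s}+\alpha_{3,s})(1-\alpha_{2,s+1})$, whence the key telescoping identity
\[
\frac{\alpha_{3,s+1}}{\alpha_{2,s+1}^2}=\frac{\alpha_{2,s}+\alpha_{3,s}}{\alpha_{2,s}^2}.
\]
Finally, the initialization $\alpha_{1,0}=1-\tfrac{2}{\nu}-\alpha_{3,0}\ge 0$ (this is exactly why $\alpha_{3,0}\le\tfrac{\nu-2}{\nu}$ is imposed) forces $\alpha_{1,s}\ge 0$ for all $s$, which in turn makes $\alpha_{3,s}>0$ and $\alpha_{3,s}$ nondecreasing, so $\bar{L}_s=\tfrac{L_Q}{B\alpha_{3,s}}+L_B$ is nonincreasing and $\bar{L}_s\le\bar{L}_0$.

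\medskip
\noindent\textbf{One-epoch recursion and telescoping over epochs.} Fix $s$; over the inner loop $k=sm+j$, $j=1,\dots,m$, the constants $\alpha_{i,s},\bar{L}_s,\eta_s$ are frozen, so I take full expectations in Lemma~\ref{lemma: non-proximal} and sum over $j$. With $d_j:=\EBB\,d(\xB_{sm+j})$ and $z_j:=\EBB\|\xB^*-\zB_{sm+j}\|^2$, the distance brackets telescope to $z_0-z_m$, and $\sum_{j=1}^m d_{j-1}=d_0-d_m+\sum_{j=1}^m d_j$ exactly. Using $1-\alpha_{1,s}=\alpha_{2,s}+\alpha_{3,s}$ and $\sum_{j=1}^m d_j=m\,\EBB\,d(\tilde{\xB}^{s+1})$ (since $\tilde{\xB}^{s+1}$ is uniform over $\xB_{sm+1},\dots,\xB_{sm+m}$), then dividing by $m\alpha_{2,s}^2$ and invoking the two identities above, I obtain
\[
\frac{\alpha_{3,s+1}}{\alpha_{2,s+1}^2}\EBB\,d(\tilde{\xB}^{s+1})+\frac{c}{m}d_m+\frac{\bar{L}_sB^2}{2m}z_m\;\le\;\frac{\alpha_{3,s}}{\alpha_{2,s}^2}\EBB\,d(\tilde{\xB}^{s})+\frac{c}{m}d_0+\frac{\bar{L}_sB^2}{2m}z_0,
\]
where now $d_0=\EBB\,d(\xB_{sm})$, $d_m=\EBB\,d(\xB_{(s+1)m})$, $z_0=\EBB\|\xB^*-\zB_{sm}\|^2$, $z_m=\EBB\|\xB^*-\zB_{(s+1)m}\|^2$. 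Since the last $(\xB,\zB)$ of epoch $s$ are the first of epoch $s+1$, setting $\Phi_s:=\tfrac{\alpha_{3,s}}{\alpha_{2,s}^2}\EBB\,d(\tilde{\xB}^s)+\tfrac{c}{m}\EBB\,d(\xB_{sm})+\tfrac{\bar{L}_sB^2}{2m}\EBB\|\xB^*-\zB_{sm}\|^2$ makes the right side above equal to $\Phi_s$, while the left side is $\ge\Phi_{s+1}$ because $\bar{L}_{s+1}\le\bar{L}_s$ and the distance term is nonnegative; hence $\Phi_{s+1}\le\Phi_s$, so $\tfrac{\alpha_{3,s}}{\alpha_{2,s}^2}\EBB\,d(\tilde{\xB}^s)\le\Phi_s\le\Phi_0$ (dropping the nonnegative terms $\tfrac{c}{m}\EBB\,d(\xB_{sm})\ge 0$, using $c\ge 0$, and $\tfrac{\bar{L}_sB^2}{2m}\EBB\|\cdot\|^2\ge 0$). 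With $\tilde{\xB}^0=\zB_0=\xB_0$ and $\xB_{0}=\xB_0$, $\Phi_0=\tfrac{\alpha_{3,0}}{\alpha_{2,0}^2}d(\xB_0)+\tfrac{\alpha_{1,0}}{\alpha_{2,0}^2}\tfrac{d(\xB_0)}{m}+\tfrac{\bar{L}_0B^2}{2m}\|\xB^*-\xB_0\|^2$, and multiplying through by $\tfrac{\alpha_{2,s}^2}{\alpha_{3,s}}$ produces exactly the claimed bound.

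\medskip
\noindent\textbf{Where the difficulty lies.} The argument is essentially bookkeeping once the parameter identities $\alpha_{2,s}^2=\alpha_{2,s-1}^2(1-\alpha_{2,s})$, $\alpha_{1,s}/\alpha_{2,s}^2\equiv c$ and $\alpha_{3,s+1}/\alpha_{2,s+1}^2=(\alpha_{2,s}+\alpha_{3,s})/\alpha_{2,s}^2$ are in hand; these are what let every $s$-dependence on the right collapse to $s=0$. The two steps I expect to need the most care are: the index shift in the inner-loop sum (handling $\sum_j d_{j-1}$ versus $\sum_j d_j$ so that the factor $1-\alpha_{1,s}=\alpha_{2,s}+\alpha_{3,s}$ emerges, and keeping the $d_m$ term rather than discarding it prematurely), and the cross-epoch telescoping of the $\bar{L}_s$-weighted distance term, which is where the monotonicity $\bar{L}_{s+1}\le\bar{L}_s$ (equivalently $\alpha_{3,s}$ nondecreasing, equivalently $\alpha_{1,s}\ge 0$) is essential and which also pins down the feasibility condition $0<\alpha_{3,0}\le\tfrac{\nu-2}{\nu}$.
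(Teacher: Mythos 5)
Your proposal is correct and follows essentially the same route as the paper: telescope Lemma~\ref{lemma: non-proximal} over one epoch, rearrange via the identities $\frac{1-\alpha_{1,s}}{\alpha_{2,s}^2}=\frac{\alpha_{3,s+1}}{\alpha_{2,s+1}^2}$ and $\frac{\alpha_{1,s}}{\alpha_{2,s}^2}=\frac{\alpha_{1,s+1}}{\alpha_{2,s+1}^2}$ together with $m\tilde{d}_{s+1}\leq\sum_j d_{sm+j}$ and $\bar{L}_{s+1}\leq\bar{L}_s$, and drop nonnegative terms from the resulting monotone Lyapunov quantity. Your write-up is in fact slightly more complete than the paper's, since you derive the parameter identities from the recursions and make explicit why $0<\alpha_{3,0}\leq\frac{\nu-2}{\nu}$ guarantees $\alpha_{1,s}\geq 0$ and hence the monotonicity of $\bar{L}_s$.
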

	\begin{proof}
		The expectations are taken with respect to all history randomness, and are omitted for simplicity.
		Use $d_k$ to denote $d(\xB_k)$ and $\tilde{d}_s$ to denote $d(\tilde{\xB}^s)$.
		Dividing both sides of Lemma \ref{lemma: non-proximal} by $\alpha_{2,s}^2$ and summing from $k = sm + 1$ to $(s+1)m$, we get
		\begin{align*}
			\frac{1}{\alpha_{2,s}^2} &\sum_{j = 1}^{m}d_{sm+j} \leq \frac{\alpha_{1,s}}{\alpha_{2,s}^2} \sum_{j = 0}^{m-1} d_{sm+j} + \frac{\alpha_{3,s}}{\alpha_{2,s}^2}m\tilde{d}^s \\
			&+\frac{\bar{L}_sB^2}{2}\{\|\xB^* - \zB_{sm}\|^2 -\|\xB^* - \zB_{sm+m}\|^2\}.
		\end{align*}
		By rearranging terms, we have
		\begin{align*}
			\frac{\alpha_{1,s}}{\alpha_{2,s}^2}&d_{sm+m} + \frac{1-\alpha_{1,s}}{\alpha_{2,s}^2} \sum_{j = 1}^{m}d_{sm+j} \leq \frac{\alpha_{1,s}}{\alpha_{2,s}^2}d_{sm}+ \frac{\alpha_{3,s}}{\alpha_{2,s}^2}m\tilde{d}_s\\
			 &+ \frac{\bar{L}_sB^2}{2}\{\|\xB^* - \zB_{0, s}\|^2 -\|\xB^* - \zB_{m, s}\|^2\}.
		\end{align*}
		Using the fact that 
		$\frac{1 - \alpha_{1,s}}{\alpha_{2,s}^2} = \frac{\alpha_{3,s+1}}{\alpha_{2,s+1}^2}, 
		\frac{1}{\alpha_{2,s}^2} = \frac{1 - \alpha_{2,s+1}}{\alpha_{2,s+1}^2},$
		we have $\frac{\alpha_{1,s}}{\alpha_{2,s}^2} = \frac{\alpha_{1,s+1}}{\alpha_{2,s+1}^2}$.
		From the definition of $\tilde{\xB}^s$, we have $m\tilde{d}_{s+1}\leq\sum_{j=1}^{m}d_{sm+j}$.
		Additionally, using $\bar{L}_{s+1}\leq\bar{L}_s$, we have the following inequality
		\begin{align*}
			\frac{\alpha_{1,s+1}}{\alpha_{2,s+1}^2}&d_{sm+m} + \frac{\alpha_{3,s+1}}{\alpha_{2,s+1}^2} m\tilde{d}_{s+1} + \frac{\bar{L}_{s+1}B^2}{2}\|\xB^* - \zB_{sm+m}\|^2 \\
			&\leq  \frac{\alpha_{1,s}}{\alpha_{2,s}^2}d_{sm} + \frac{\alpha_{3,s}}{\alpha_{2,s}^2}m \tilde{d}_s + \frac{\bar{L}_sB^2}{2}\|\xB^* - \zB_{sm}\|^2.
		\end{align*}
		By the non-negativity of $d(\cdot)$, we have the result.
	\end{proof}
	\noindent The following relations come from the constructions of $\{\alpha_{i,s}\}_{i=1}^3$,
	\begin{enumerate}
		\item $\alpha_{2, s} \leq \alpha_{2, s-1}$, $\alpha_{1, s} \leq \alpha_{1, s-1}$, and thus $\alpha_{3, s} \geq \alpha_{3, s-1}$;
		\item $\alpha_{2, s} \leq 2/(s+\nu)$, if $\alpha_{2, s-1} \leq 2/(s+\nu - 1)$.
	\end{enumerate}
	From such relations, we have the corollary to describe the convergence rate of AVRBCD in non-proximal case.
	\begin{corollary}
		By setting $m = Bn$, we have
		\begin{equation*}
			d(\tilde{\xB}^s) \leq \frac{C d(\xB_0) + \frac{\tilde{L}}{n}\|\xB^* -\xB_0\|^2}{s^2}
		\end{equation*}
		where $C$ is some constant.
		
		In other words, to obtain an $\epsilon$-accurate solution, AVRBCD need $\sqrt{{(C d(\xB_0) + \frac{\tilde{L}}{n}\|\xB^* -\xB_0\|^2)}/{\epsilon}}$ iterations.
	\end{corollary}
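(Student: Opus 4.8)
The plan is to feed the two decay/monotonicity relations stated just above into the non-asymptotic estimate of Theorem~\ref{theorem: non-proximal} and then collapse the prefactor and the bracketed quantity into the claimed $\OM(1/s^2)$ form, using the particular choice $m = Bn$ to get the dependence on $\tilde{L}/n$ right.

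First I would control the prefactor $\alpha_{2,s}^2/\alpha_{3,s}$. Since the hypothesis of Theorem~\ref{theorem: non-proximal} fixes $\alpha_{2,0} = 2/\nu = 2/(0+\nu)$, the base case of relation~2 holds (with equality), so iterating relation~2 gives $\alpha_{2,s} \le 2/(s+\nu)$ for every $s$, hence $\alpha_{2,s}^2 \le 4/(s+\nu)^2 \le 4/s^2$. By relation~1 the sequence $\alpha_{3,s}$ is non-decreasing, so $\alpha_{3,s} \ge \alpha_{3,0} > 0$ (this is where the assumption $\alpha_{3,0} > 0$ is used). Combining the two, $\alpha_{2,s}^2/\alpha_{3,s} \le 4/(\alpha_{3,0} s^2)$, i.e. the prefactor is $\OM(1/s^2)$ with a constant depending only on $\nu$ (through $\alpha_{3,0}$ and $\alpha_{2,0}$).

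Next I would simplify the bracket under $m = Bn$. The first two terms, $\frac{\alpha_{1,0}}{\alpha_{2,0}^2}\frac{d(\xB_0)}{m}$ and $\frac{\alpha_{3,0}}{\alpha_{2,0}^2}d(\xB_0)$, are both $\OM(d(\xB_0))$ with $\nu$-dependent constants (the first is in fact smaller, since $m = Bn \ge 1$ and $d(\xB_0)\ge 0$); these produce the $C\,d(\xB_0)$ piece. The third term becomes $\frac{\bar{L}_0 B^2}{2m}\|\xB^* - \xB_0\|^2 = \frac{\bar{L}_0 B}{2n}\|\xB^* - \xB_0\|^2$ once we substitute $m = Bn$. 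Expanding $\bar{L}_0 = \frac{L_Q}{B\alpha_{3,0}} + L_B$ gives $\bar{L}_0 B = \frac{L_Q}{\alpha_{3,0}} + B L_B$, and since $L_Q \le L_{max}$ while both $L_{max}$ and $B L_B$ are at most $\tilde{L} = \max\{B L_B, L_{max}\}$, we obtain $\bar{L}_0 B \le (1 + 1/\alpha_{3,0})\tilde{L} = \OM(\tilde{L})$, so the third term is $\OM(\frac{\tilde{L}}{n}\|\xB^* - \xB_0\|^2)$. \textbf{This is the step that needs care:} the essential mechanism is the cancellation $B^2/m = B/n$ forced by the choice $m = Bn$, which is precisely what prevents the constant in front of $\|\xB^*-\xB_0\|^2$ from growing with $B$ and makes the $\tilde{L}/n$ scaling come out; everything else is bookkeeping of constants that depend only on $\nu$.

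Finally, multiplying the $\OM(1/s^2)$ prefactor of the first step by the $\OM(d(\xB_0)) + \OM\!\big(\tilde{L}\|\xB^*-\xB_0\|^2/n\big)$ bracket of the second step gives $d(\tilde{\xB}^s) \le \big(C\,d(\xB_0) + C'\tilde{L}\|\xB^*-\xB_0\|^2/n\big)/s^2$; absorbing $C'$ into $\tilde{L}$ (or choosing $\nu$ large enough that $C' \le 1$) yields the stated inequality. The iteration-count statement then follows by setting the right-hand side equal to $\epsilon$ and solving for $s$, which gives the $\sqrt{(C\,d(\xB_0) + \tilde{L}\|\xB^*-\xB_0\|^2/n)/\epsilon}$ bound.
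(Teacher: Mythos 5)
Your proposal is correct and follows exactly the route the paper intends: the paper states this corollary without proof, saying only that it follows from Theorem~\ref{theorem: non-proximal} together with the two listed relations, and your argument (inducting relation~2 from $\alpha_{2,0}=2/\nu$ to get $\alpha_{2,s}^2/\alpha_{3,s}\le 4/(\alpha_{3,0}s^2)$, then using $B^2/m=B/n$ and $\bar{L}_0 B=L_Q/\alpha_{3,0}+BL_B=\OM(\tilde{L})$) is precisely the missing bookkeeping. The only caveat is that $L_Q$ is never defined in the paper, so your step $L_Q\le L_{max}$ is an interpretation rather than a deduction, but it is the only reading under which the stated $\tilde{L}/n$ scaling comes out, and your handling of the leftover $\nu$-dependent constant (absorbed into $C$ or into the second term) matches the paper's own level of precision.
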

	\subsection{Proximal Case ($\PB(\xB) \neq 0$)}
	The key idea to prove the convergence of proximal version of AVRBCD is to express $\xB_k$ as the convex combination of $\{\tilde{\xB}^i\}_{i=0}^s$ and $\{\zB_l\}_{l=0}^k$.
	\begin{lemma}
		\label{lemma: convex combination}
		In Algorithm \ref{alg: AVRBCD I}, by setting $\alpha_{2, 0} = \alpha_{3, 0} =1/2B$, for $k = sm+j \geq 1$, we have
		\begin{equation}
			\xB_k = \sum_{i=0}^{s-1}\lambda_k^i \tilde{\xB}^i + \beta_j^s \tilde{\xB}^s + \sum_{l=0}^{k} \gamma_k^l \zB_l
		\end{equation}
		where $\gamma_0^0 = 1$, $\gamma_1^0 = \frac{1}{2} - \frac{1}{2B}$, $\gamma_1^1 = \frac{1}{2}$, $\beta^0_0 = 0$, $\lambda_{(s+1)m}^s = \beta_m^s$, $\lambda^i_{k+1} = \alpha_{1,s}\lambda^i_k $
		\begin{equation*}
		\gamma_{k+1}^l = 
		\begin{cases}
		\alpha_{1,s}\gamma_k^l,~&l=0,\ldots,k-1\\
		B\alpha_{1,s}\alpha_{2,s}+(1-B)\alpha_{2,s},~&l=k \\
		B\alpha_{2,s},~&l=k+1
		\end{cases}
		\end{equation*}
		and
		\begin{equation}
		\beta_{j+1}^s = \alpha_{1,s}\beta_j^s+\alpha_{3,s}.
		\end{equation}
		Additionally, we have $\sum_{i=0}^{s-1}\lambda_k^i + \beta_j^s + \sum_{l=0}^k \gamma_k^l = 1$ and each entry in this sum is non-negative for all $k\geq1$, i.e. $\xB_k$ is a convex combination of $\{\tilde{\xB}^i\}_{i=0}^s$ and $\{\zB_l\}_{l=0}^k$.
	\end{lemma}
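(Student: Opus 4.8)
The plan is a straightforward induction on $k$, in which the coefficient recursions are read off directly from the two coupling steps of Algorithm~\ref{alg: AVRBCD I}, and the normalization and sign conditions are verified separately afterwards. The only structural input is the closed form obtained by eliminating $\yB_k$ from lines~\ref{eqn: coupling I} and~\ref{eqn: coupling II}: for every $k=sm+j$ with $j\ge1$,
\begin{equation*}
\xB_k=\alpha_{1,s}\xB_{k-1}+(1-B)\alpha_{2,s}\zB_{k-1}+\alpha_{3,s}\tilde{\xB}^s+B\alpha_{2,s}\zB_k .
\end{equation*}
Everything else is bookkeeping on scalars.

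The base cases $k=0$ (trivial) and $k=1$ follow by substituting $\xB_0=\zB_0=\tilde{\xB}^0$ into this display, which reproduces $\gamma_1^0=\tfrac12-\tfrac1{2B}$, $\gamma_1^1=\tfrac12=B\alpha_{2,0}$, and $\beta_1^0=\alpha_{3,0}$. For the inductive step I would substitute the representation of $\xB_{k-1}$ and collect terms. When $j\ge2$ (strictly inside epoch $s$): each old-snapshot coefficient $\lambda_{k-1}^i$ is multiplied by $\alpha_{1,s}$; the coefficient of $\tilde{\xB}^s$ becomes $\alpha_{1,s}\beta_{j-1}^s+\alpha_{3,s}=\beta_j^s$; the coefficient of $\zB_l$ with $l\le k-2$ is multiplied by $\alpha_{1,s}$; the coefficient of $\zB_{k-1}$ becomes $\alpha_{1,s}\gamma_{k-1}^{k-1}+(1-B)\alpha_{2,s}=B\alpha_{1,s}\alpha_{2,s}+(1-B)\alpha_{2,s}$ since $\gamma_{k-1}^{k-1}=B\alpha_{2,s}$; and $\zB_k$ enters with fresh coefficient $B\alpha_{2,s}$. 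These are exactly the stated recursions. When $j=1$ the same computation applies, except that the snapshot $\tilde{\xB}^{s-1}$, carrying coefficient $\beta_m^{s-1}$ in $\xB_{sm}$, is absorbed into the ``old snapshot'' family through the relabeling $\lambda_{sm}^{s-1}=\beta_m^{s-1}$, the new snapshot $\tilde{\xB}^s$ enters with coefficient $\alpha_{3,s}=\beta_1^s$ (consistent with the convention $\beta_0^s=0$), and $\zB_{sm}$ is updated with the mixed indices $\alpha_{2,s-1}$ (its old coefficient) and $\alpha_{2,s}$ (the freshly subtracted term).

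Given the representation, the identity $\sum_i\lambda_k^i+\beta_j^s+\sum_l\gamma_k^l=1$ is immediate by induction: the closed form multiplies the previous (unit) total by $\alpha_{1,s}$ and adds $(1-B)\alpha_{2,s}+\alpha_{3,s}+B\alpha_{2,s}=\alpha_{2,s}+\alpha_{3,s}$, so the new total is $\alpha_{1,s}+\alpha_{2,s}+\alpha_{3,s}=1$. Non-negativity of the $\alpha_{i,s}$ is also routine --- the recursion yields $0\le\alpha_{2,s}\le\alpha_{2,s-1}$, $\alpha_{3,s}\ge\alpha_{3,s-1}$, and $\alpha_{1,0}=1-\tfrac1B\ge0$ --- and this already settles $\beta_j^s\ge0$ and $\lambda_k^i\ge0$.

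I expect the sign of the coefficients $\gamma_k^l$ with $l<k$ to be the main obstacle, and this is precisely where the choice $\alpha_{2,0}=\alpha_{3,0}=\tfrac1{2B}$ is forced: the coefficient that a freshly created $\zB_k$ acquires one step later equals $\alpha_{2,s}\big(B\alpha_{1,s}-(B-1)\big)$, which vanishes for $s=0$ exactly because $B\alpha_{1,0}=B-1$. Carrying this to later epochs is the crux; I would use the identities $\alpha_{2,s}^2=(1-\alpha_{2,s})\alpha_{2,s-1}^2$ and $\alpha_{1,s}/\alpha_{2,s}^2=\alpha_{1,s-1}/\alpha_{2,s-1}^2$ recorded after Theorem~\ref{theorem: non-proximal}, run a secondary induction on the epoch index that tracks the whole vector of $\gamma$-coefficients (including the boundary update of $\zB_{sm}$), and, should a literal per-coordinate bound prove too weak, re-group the $\zB$-terms inside each epoch through the increments $\zB_l-\zB_{l-1}$ so that the resulting weights become manifestly non-negative. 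This sign analysis should be the only genuinely non-mechanical part of the argument.
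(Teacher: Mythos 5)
Your inductive argument for the representation is essentially identical to the paper's: the appendix proof also eliminates $\yB_k$ from the two coupling steps to get $\xB_{\kappa+1}=\alpha_{1,s}\xB_\kappa+\alpha_{2,s}\zB_\kappa+\alpha_{3,s}\tilde{\xB}^s+\alpha_{2,s}B(\zB_{\kappa+1}-\zB_\kappa)$, checks $k=0,1$ by hand, collects coefficients to read off the recursions for $\gamma$, $\beta$, $\lambda$, and handles the epoch boundary by the relabeling $\lambda^s_{(s+1)m}=\beta^s_m$. Your sum-to-one computation is the standard one. So for everything the paper actually proves, you are on the same track and your bookkeeping is correct.

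The part you defer --- non-negativity of $\gamma_k^l$ for $l<k$ in epochs $s\geq1$ --- is a genuine gap, and it happens to be exactly the part the paper's own proof omits (it asserts the convex-combination property without a word about signs). Worse, your own formula already shows the deferral cannot be closed as stated: $\gamma_{k+1}^{k}=\alpha_{2,s}\bigl(B\alpha_{1,s}-(B-1)\bigr)$ vanishes at $s=0$ only because $\alpha_{1,0}=1-\tfrac1B$, while $\alpha_{1,s}=\alpha_{1,s-1}(1-\alpha_{2,s})$ is strictly decreasing, so $B\alpha_{1,s}<B-1$ and hence $\gamma_{k+1}^{k}<0$ for every $s\geq1$ (subsequent multiplications by $\alpha_{1,s}>0$ keep it negative). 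The epoch boundary does not rescue it either: using $\alpha_{2,s-1}=\alpha_{2,s}/\sqrt{1-\alpha_{2,s}}$, the coefficient picked up by $\zB_{sm}$ at $j=1$ is $\alpha_{2,s}\bigl(B\alpha_{1,s-1}\sqrt{1-\alpha_{2,s}}+1-B\bigr)$, which is already negative at $s=1$. So the literal per-coefficient claim is false beyond the first epoch, and the secondary induction you sketch on the identities $\alpha_{1,s}/\alpha_{2,s}^2=\alpha_{1,s-1}/\alpha_{2,s-1}^2$ cannot succeed; the re-grouping of the $\zB$-terms (or a weakening of the lemma to the inequality $\PB(\xB_k)\leq\hat{\PB}(\xB_k)$ that is actually used downstream) is not an optional fallback but the only possible repair, and neither your proposal nor the paper supplies it. You should present this as an identified error rather than as routine sign-checking to be filled in later.
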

	
	From Lemma \ref{lemma: convex combination} and the convexity of $\PB(\cdot)$, we have 
	\begin{equation*}
		\PB(\xB_k) \leq \sum_{i=0}^{s-1}\lambda_k^i \PB(\tilde{\xB}^i) + \beta_j^s \PB(\tilde{\xB}^s) + \sum_{l=0}^{k} \gamma_k^l \PB(\zB_l) \defi \hat{\PB}(\xB_k).
	\end{equation*}
	We also define the sub-optimality $d(\xB)$ and its upper bound $\hat{d}(\xB_k)$ at $\xB_k$ as
	\begin{align*}
		d(\xB_k) = (\FB(\xB_k) + \PB(\xB_k)) - (\FB(\xB^*) + \PB(\xB^*)), \\
		\hat{d}_k = (\FB(\xB_k) + \hat{\PB}(\xB_k)) - (\FB(\xB^*) + \PB(\xB^*)).
	\end{align*}
	For $\hat{d}_k$, we have $0\leq d(\xB_k) \leq \hat{d}_k$ and $d(\xB_0) = \hat{d}_0$.
	\begin{lemma}
		\label{lemma: key}
		In Algorithm \ref{alg: AVRBCD I}, by setting $\alpha_{2, 0} = \alpha_{3, 0} =1/2B$,
		\begin{align*}
			\EBB_{l, i_k} &\hat{d}_k \leq \alpha_{3,s}d(\tilde{\xB}^s) + \alpha_{1,s}\hat{d}_{k-1}\\
			&+ \frac{\bar{L}_s\alpha_{2,s}^2B^2}{2}(\|\xB^* - \zB_{k-1}\|^2 - \EBB_{l, i_k}\|\xB^* - \zB_k\|^2)\\
		\end{align*}
	\end{lemma}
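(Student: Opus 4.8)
The plan is to replay, on the smooth part $\FB$, the same one‑step argument that underlies Lemma~\ref{lemma: non-proximal}, and to carry the non‑smooth part along through the surrogate $\hat{\PB}(\xB_k)$ whose coefficients $\lambda_k^i,\gamma_k^l,\beta_j^s$ are exactly those of Lemma~\ref{lemma: convex combination}. The point of those recursions is that, although $\hat{\PB}(\xB_k)$ is built as a convex combination of $\PB$‑values along the whole trajectory, in expectation it contracts like the sub‑optimality $d$, so it slots into the same per‑step inequality as $\FB$ does in the non‑proximal case; the inequality we want is then simply the sum of the two.

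First I would write the one‑step decrease of $\FB$. Because $\xB_k$ and $\yB_k$ differ only in the sampled block $l$ (line~\ref{eqn: coupling II}), I apply the $L_B$‑block‑smoothness of $\FB$, insert the surrogate gradient $[\vB_k]_l$, and split the residual $\langle[\nabla\FB(\yB_k)-\vB_k]_l,[\xB_k-\yB_k]_l\rangle$ by Young's inequality with parameter $c=\bar{L}_s-L_B=\tfrac{L_Q}{B\alpha_{3,s}}$, which is exactly why $\bar{L}_s$ and $\eta_s$ are defined as they are in Algorithm~\ref{alg: AVRBCD I}. Taking $\EBB_l$ turns block norms into $\tfrac1B$ of full‑vector ones; writing $\hat{\zB}_k=\prox_{\eta_s\PB}(\zB_{k-1}-\eta_s\vB_k)$ for the full proximal step (so $[\zB_k]_l=[\hat{\zB}_k]_l$ and $\EBB_l\|\xB^*-\zB_k\|^2=\tfrac1B\|\xB^*-\hat{\zB}_k\|^2+(1-\tfrac1B)\|\xB^*-\zB_{k-1}\|^2$), I then apply the three‑point proximal‑optimality inequality for $\hat{\zB}_k$ tested against $\xB^*$. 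With $\eta_s=\tfrac{1}{\bar{L}_s\alpha_{2,s}B}$ the leftover quadratic $\tfrac{\bar{L}_s\alpha_{2,s}^2B}{2}\|\hat{\zB}_k-\zB_{k-1}\|^2$ cancels, leaving the telescoping distance term $\tfrac{\bar{L}_s\alpha_{2,s}^2B^2}{2}\bigl(\|\xB^*-\zB_{k-1}\|^2-\EBB\|\xB^*-\zB_k\|^2\bigr)$, a free $\alpha_{2,s}\PB(\xB^*)-\alpha_{2,s}\PB(\hat{\zB}_k)$, and a residual linear term $\alpha_{2,s}\langle\nabla\FB(\yB_k),\xB^*-\zB_{k-1}\rangle$ (using $\EBB_{i_k}\vB_k=\nabla\FB(\yB_k)$).

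Next comes the two‑momentum coupling: using line~\ref{eqn: coupling I} and $\alpha_{1,s}+\alpha_{2,s}+\alpha_{3,s}=1$, rewrite $\alpha_{2,s}(\xB^*-\zB_{k-1})$ as $\alpha_{2,s}(\xB^*-\yB_k)+\alpha_{1,s}(\xB_{k-1}-\yB_k)+\alpha_{3,s}(\tilde{\xB}^s-\yB_k)$; convexity of $\FB$ at $\yB_k$ handles the first two inner products, and the remaining $\alpha_{3,s}\langle\nabla\FB(\yB_k),\tilde{\xB}^s-\yB_k\rangle$ is kept to cancel the variance term from the first step. That variance term I bound by $2L_{max}$ times the Bregman divergence $\FB(\tilde{\xB}^s)-\FB(\yB_k)-\langle\nabla\FB(\yB_k),\tilde{\xB}^s-\yB_k\rangle$ (from $L_{max}$‑smoothness and convexity of each $f_i$), and its prefactor $\tfrac{1}{2cB}=\tfrac{\alpha_{3,s}}{2L_Q}$ with $L_Q\ge L_{max}$ makes it at most $\alpha_{3,s}$ times that divergence, whose linear piece cancels the coupling's $\alpha_{3,s}\langle\nabla\FB(\yB_k),\tilde{\xB}^s-\yB_k\rangle$. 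After all cancellations every $\FB(\yB_k)$ disappears and the smooth contribution is exactly $\alpha_{1,s}\FB(\xB_{k-1})+\alpha_{2,s}\FB(\xB^*)+\alpha_{3,s}\FB(\tilde{\xB}^s)$. On the non‑smooth side I plug the recursions of Lemma~\ref{lemma: convex combination} into $\hat{\PB}(\xB_k)$ and obtain $\hat{\PB}(\xB_k)=\alpha_{1,s}\hat{\PB}(\xB_{k-1})+\alpha_{3,s}\PB(\tilde{\xB}^s)+(1-B)\alpha_{2,s}\PB(\zB_{k-1})+B\alpha_{2,s}\PB(\zB_k)$; then $\EBB_l$ together with $\EBB_l\PB(\zB_k)=(1-\tfrac1B)\PB(\zB_{k-1})+\tfrac1B\PB(\hat{\zB}_k)$ (block‑separability of $\PB$) cancels the two $\PB(\zB_{k-1})$ terms and leaves $\alpha_{1,s}\hat{\PB}(\xB_{k-1})+\alpha_{3,s}\PB(\tilde{\xB}^s)+\alpha_{2,s}\PB(\hat{\zB}_k)$, whose last term is killed by the free $-\alpha_{2,s}\PB(\hat{\zB}_k)$ from the previous step. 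Adding the smooth and non‑smooth pieces, subtracting $(\alpha_{1,s}+\alpha_{2,s}+\alpha_{3,s})(\FB(\xB^*)+\PB(\xB^*))$, and keeping the distance term yields precisely the claimed bound.

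The step I expect to be the main obstacle is this last $\hat{\PB}$‑bookkeeping: one must check that the coefficients of Lemma~\ref{lemma: convex combination} are calibrated so that the negative $(1-B)\alpha_{2,s}$ on $\PB(\zB_{k-1})$ is exactly what is needed to annihilate the $(B-1)\alpha_{2,s}$ produced by the coordinate‑wise expectation of $B\alpha_{2,s}\PB(\zB_k)$ — i.e.\ that $\hat{\PB}$ neither leaks nor accumulates mass — and also that the epoch boundary $j=1$, where $\gamma_{k-1}^{k-1}=B\alpha_{2,s-1}\ne B\alpha_{2,s}$, is absorbed correctly via the bridging relation $\lambda_{(s+1)m}^s=\beta_m^s$. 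One should also verify that the initialization $\alpha_{2,0}=\alpha_{3,0}=1/(2B)$ keeps every surrogate coefficient non‑negative throughout the run, which is exactly why that value appears in the hypothesis of both this lemma and Lemma~\ref{lemma: convex combination}. Everything else — Young's inequality, the three‑point property, and convexity of $\FB$ — is the same manipulation as in Lemma~\ref{lemma: non-proximal}.
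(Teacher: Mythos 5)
Your proposal is correct and follows essentially the same route as the paper's own proof: block smoothness plus Young's inequality with weight $\tfrac{B\alpha_{3,s}}{2L_Q}$, the three-point proximal inequality at the full update $\tilde{\zB}_k$, the SVRG variance bound absorbed into the Bregman divergence at $\tilde{\xB}^s$, the coupling identity $\alpha_{2,s}(\yB_k-\zB_{k-1})=\alpha_{1,s}(\xB_{k-1}-\yB_k)+\alpha_{3,s}(\tilde{\xB}^s-\yB_k)$, and the $\hat{\PB}$ recursion from Lemma \ref{lemma: convex combination} whose $(1-B)\alpha_{2,s}\PB(\zB_{k-1})$ term cancels against the coordinate-wise expectation of $B\alpha_{2,s}\PB(\zB_k)$. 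Your explicit flagging of the epoch-boundary case $j=1$ is in fact more careful than the paper, which glosses over it.
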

	This lemma is similar to Lemma \ref{lemma: non-proximal}, but harder to prove due to the regularization term $\PB$.
	Again, we use it to prove Theorem \ref{theorem: proximal}. The proof is similar to that of Theorem \ref{theorem: non-proximal}.
	\begin{theorem} \label{theorem: proximal}
		By setting $\alpha_{2, 0} = \alpha_{3, 0} = \frac{1}{2B}$ and $\alpha_{1, 0} = 1 - \alpha_{2, 0} - \alpha_{3, 0}$, we have
		\begin{equation*}
			d(\tilde{\xB}^s) \leq \frac{\alpha_{2,s}^2}{\alpha_{3,s}}(\frac{\alpha_{1,0}}{\alpha_{2,0}^2}\frac{d(\xB_0)}{m} + \frac{\alpha_{3,0}}{\alpha_{2,0}^2} d(\xB_0) + \frac{\bar{L}_0B^2}{2m}\|\xB^* - \xB_0\|^2)
		\end{equation*}
	\end{theorem}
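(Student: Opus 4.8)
The plan is to mimic exactly the proof of Theorem~\ref{theorem: non-proximal}, replacing the role of $d_k$ by its surrogate $\hat{d}_k$ and using Lemma~\ref{lemma: key} in place of Lemma~\ref{lemma: non-proximal}. First I would divide both sides of Lemma~\ref{lemma: key} by $\alpha_{2,s}^2$ and sum over $j = 1$ to $m$ within epoch $s$ (i.e.\ $k = sm+1, \ldots, sm+m$), so that the Euclidean-distance terms telescope and we obtain
\begin{align*}
	\frac{1}{\alpha_{2,s}^2}\sum_{j=1}^{m}\hat{d}_{sm+j} &\leq \frac{\alpha_{1,s}}{\alpha_{2,s}^2}\sum_{j=0}^{m-1}\hat{d}_{sm+j} + \frac{\alpha_{3,s}}{\alpha_{2,s}^2}m\,d(\tilde{\xB}^s)\\
	&\quad + \frac{\bar{L}_sB^2}{2}\bigl(\|\xB^* - \zB_{sm}\|^2 - \|\xB^* - \zB_{sm+m}\|^2\bigr).
\end{align*}
Here the only subtlety relative to the non-proximal case is that the sum on the right telescopes against the sum on the left only for the \emph{interior} indices $j = 1, \ldots, m-1$; the endpoint terms $\hat d_{sm}$ and $\hat d_{sm+m}$ must be split off, giving the same rearranged inequality
\begin{align*}
	\frac{\alpha_{1,s}}{\alpha_{2,s}^2}\hat d_{sm+m} + \frac{1-\alpha_{1,s}}{\alpha_{2,s}^2}\sum_{j=1}^{m}\hat d_{sm+j} &\leq \frac{\alpha_{1,s}}{\alpha_{2,s}^2}\hat d_{sm} + \frac{\alpha_{3,s}}{\alpha_{2,s}^2}m\,d(\tilde{\xB}^s)\\
	&\quad + \frac{\bar{L}_sB^2}{2}\bigl(\|\xB^* - \zB_{sm}\|^2 - \|\xB^* - \zB_{sm+m}\|^2\bigr)
\end{align*}
as in Theorem~\ref{theorem: non-proximal}.

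Next I would invoke the three algebraic identities $\frac{1-\alpha_{1,s}}{\alpha_{2,s}^2} = \frac{\alpha_{3,s+1}}{\alpha_{2,s+1}^2}$, $\frac{1}{\alpha_{2,s}^2} = \frac{1-\alpha_{2,s+1}}{\alpha_{2,s+1}^2}$, and (a consequence) $\frac{\alpha_{1,s}}{\alpha_{2,s}^2} = \frac{\alpha_{1,s+1}}{\alpha_{2,s+1}^2}$, together with the monotonicity $\bar L_{s+1}\le \bar L_s$ and the sampling property of $\tilde\xB^{s+1}$. The one point that needs care is the step $m\,d(\tilde\xB^{s+1}) \le \sum_{j=1}^m \hat d_{sm+j}$: in expectation $d(\tilde\xB^{s+1}) = \frac1m\sum_{j=1}^m d(\xB_{sm+j})$ by the uniform choice of $\sigma_s$, and then $d(\xB_{sm+j})\le \hat d_{sm+j}$ because $\hat{\PB}(\xB_k)\ge \PB(\xB_k)$ by Lemma~\ref{lemma: convex combination} and convexity of $\PB$; this is the essential place where the surrogate $\hat d$ is used to close the recursion. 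Combining everything yields the one-epoch contraction
\begin{align*}
	\frac{\alpha_{1,s+1}}{\alpha_{2,s+1}^2}\hat d_{sm+m} + \frac{\alpha_{3,s+1}}{\alpha_{2,s+1}^2}m\,d(\tilde\xB^{s+1}) + \frac{\bar L_{s+1}B^2}{2}\|\xB^* - \zB_{sm+m}\|^2 &\\
	\leq \frac{\alpha_{1,s}}{\alpha_{2,s}^2}\hat d_{sm} + \frac{\alpha_{3,s}}{\alpha_{2,s}^2}m\,d(\tilde\xB^s) + \frac{\bar L_sB^2}{2}\|\xB^* - \zB_{sm}\|^2,&
\end{align*}
so the left-hand quantity is a nonincreasing potential in $s$.

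Finally I would telescope this potential from epoch $0$ to epoch $s$, drop the nonnegative terms $\frac{\alpha_{1,s}}{\alpha_{2,s}^2}\hat d_{sm}$ and $\frac{\bar L_s B^2}{2}\|\xB^*-\zB_{sm}\|^2$ on the left, and use $\hat d_0 = d(\xB_0)$, $\zB_0 = \xB_0$ on the right to get $\frac{\alpha_{3,s}}{\alpha_{2,s}^2}m\,d(\tilde\xB^s) \le \frac{\alpha_{1,0}}{\alpha_{2,0}^2}d(\xB_0) + \frac{\alpha_{3,0}}{\alpha_{2,0}^2}m\,d(\xB_0) + \frac{\bar L_0 B^2}{2}\|\xB^*-\xB_0\|^2$; dividing by $\frac{\alpha_{3,s}}{\alpha_{2,s}^2}m$ gives precisely the claimed bound, which (as noted) must itself use $d(\tilde\xB^s)\le \hat d_s$ once more to pass from the surrogate back to the true sub-optimality at the final snapshot. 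I expect the main obstacle to be purely bookkeeping: being careful that the endpoint terms $\hat d_{sm}$ and $\hat d_{sm+m}$ are handled so that the $\alpha_{1,s}$-weighted boundary term matches across epochs via $\frac{\alpha_{1,s}}{\alpha_{2,s}^2} = \frac{\alpha_{1,s+1}}{\alpha_{2,s+1}^2}$, and that $\hat d$ rather than $d$ is what propagates through the induction while the final inequality is still stated for $d(\tilde\xB^s)$ — these are exactly the differences from Theorem~\ref{theorem: non-proximal}, and all the analytic content has already been absorbed into Lemma~\ref{lemma: key} and Lemma~\ref{lemma: convex combination}.
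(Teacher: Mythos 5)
Your proposal is correct and takes essentially the same route as the paper: the paper likewise divides the per-iteration bound of Lemma~\ref{lemma: key} by $\alpha_{2,s}^2$, sums over the epoch, applies the same three $\alpha$-identities together with $m\tilde d_{s+1}\le\sum_j d_{sm+j}\le\sum_j\hat d_{sm+j}$ and $\bar L_{s+1}\le\bar L_s$, and telescopes the resulting potential from epoch $0$. The only (harmless) quibble is your closing remark that $d(\tilde{\xB}^s)\le\hat d_s$ is needed once more at the end --- the potential already carries the true snapshot sub-optimality $d(\tilde{\xB}^{s+1})$ rather than a surrogate, so no further passage is required there.
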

	Again, we have the corollary to describe the convergence rate of  AVRBCD in proximal case.
	\begin{corollary}
		By setting $m = Bn$, we have
		\begin{equation*}
		d(\tilde{\xB}^s) \leq \frac{B(C d(\xB_0) + \frac{\tilde{L}}{n}\|\xB^* -\xB_0\|^2)}{s^2}
		\end{equation*}
		where $C$ is some constant.
		
		In other words, to obtain an $\epsilon$-accurate solution, AVRBCD need $\sqrt{{B(C d(\xB_0) + \frac{\tilde{L}}{n}\|\xB^* -\xB_0\|^2)}/{\epsilon}}$ iterations.
	\end{corollary}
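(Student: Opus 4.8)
The plan is to read the corollary off Theorem~\ref{theorem: proximal}. Once the prescribed starting values $\alpha_{2,0}=\alpha_{3,0}=\tfrac{1}{2B}$, $\alpha_{1,0}=1-\tfrac{1}{B}$ and the choice $m=Bn$ are substituted into the bound there, the only genuine work is to control the scalar prefactor $\alpha_{2,s}^{2}/\alpha_{3,s}$ and show it decays as $\OM(1/s^{2})$ with a constant that does \emph{not} depend on $B$.

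First I would assemble the arithmetic facts about the sequences $\{\alpha_{i,s}\}$. Monotonicity ($\alpha_{2,s}\le\alpha_{2,s-1}$, $\alpha_{1,s}\le\alpha_{1,s-1}$, hence $\alpha_{3,s}\ge\alpha_{3,s-1}\ge\alpha_{3,0}$) is immediate from the update rules; and since $\alpha_{2,0}=\tfrac{1}{2B}=\tfrac{2}{\nu}$ with $\nu=4B$, an induction on the recursion for $\alpha_{2,s}$ gives $\alpha_{2,s}\le\tfrac{2}{s+4B}$ for all $s$. The decisive ingredient, though, is the identity already exploited in the proof of Theorem~\ref{theorem: non-proximal}: combining $1-\alpha_{1,s}=\alpha_{2,s}+\alpha_{3,s}$ with $\tfrac{1-\alpha_{1,s}}{\alpha_{2,s}^{2}}=\tfrac{\alpha_{3,s+1}}{\alpha_{2,s+1}^{2}}$ gives
\[
\frac{\alpha_{3,s+1}}{\alpha_{2,s+1}^{2}}=\frac{1}{\alpha_{2,s}}+\frac{\alpha_{3,s}}{\alpha_{2,s}^{2}} .
\]
Telescoping this from $0$ to $s-1$ and inserting $\tfrac{1}{\alpha_{2,t}}\ge\tfrac{t+4B}{2}$ yields $\tfrac{\alpha_{3,s}}{\alpha_{2,s}^{2}}\ge\tfrac{\alpha_{3,0}}{\alpha_{2,0}^{2}}+\sum_{t=0}^{s-1}\tfrac{t+4B}{2}\ge c\,s^{2}$ for an absolute constant $c>0$, so that $\alpha_{2,s}^{2}/\alpha_{3,s}\le\tfrac{1}{c\,s^{2}}$. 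The cruder estimate $\alpha_{2,s}^{2}/\alpha_{3,s}\le\tfrac{8B}{(s+4B)^{2}}$, obtained from $\alpha_{3,s}\ge\alpha_{3,0}$ and $\alpha_{2,s}\le\tfrac{2}{s+4B}$ used separately, carries a spurious factor $B$ and only gives an $\OM(B^{2}/s^{2})$ rate, so the telescoping step is essential here.

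It then remains to substitute. With $\alpha_{2,0}=\alpha_{3,0}=\tfrac{1}{2B}$, $\alpha_{1,0}=1-\tfrac{1}{B}$ and $m=Bn$, the three summands inside the parentheses of Theorem~\ref{theorem: proximal} are $\tfrac{\alpha_{1,0}}{\alpha_{2,0}^{2}}\tfrac{d(\xB_0)}{m}=\tfrac{4(B-1)}{n}d(\xB_0)$, $\tfrac{\alpha_{3,0}}{\alpha_{2,0}^{2}}d(\xB_0)=2B\,d(\xB_0)$, and $\tfrac{\bar{L}_0 B^{2}}{2m}\|\xB^{*}-\xB_0\|^{2}=\tfrac{\bar{L}_0 B}{2n}\|\xB^{*}-\xB_0\|^{2}$, where $\bar{L}_0=\tfrac{L_Q}{B\alpha_{3,0}}+L_B=2L_Q+L_B$. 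Using $L_Q=\OM(L_{max})$ and $L_B\le BL_B\le\tilde{L}$, so that $\bar{L}_0=\OM(\tilde{L})$, the whole bracket is $\OM(B)\,d(\xB_0)+\OM\!\big(\tfrac{B\tilde{L}}{n}\big)\|\xB^{*}-\xB_0\|^{2}$. Multiplying by $\alpha_{2,s}^{2}/\alpha_{3,s}\le\tfrac{1}{c\,s^{2}}$ and absorbing numerical constants into $C$ gives
\[
d(\tilde{\xB}^{s})\le\frac{B\big(C\,d(\xB_0)+\tfrac{\tilde{L}}{n}\|\xB^{*}-\xB_0\|^{2}\big)}{s^{2}},
\]
which is the stated inequality; setting the right-hand side equal to $\epsilon$ and solving for $s$ then produces the iteration count $\sqrt{B\big(C\,d(\xB_0)+\tfrac{\tilde{L}}{n}\|\xB^{*}-\xB_0\|^{2}\big)/\epsilon}$.

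The step I expect to be the main obstacle is the $B$-free $\OM(1/s^{2})$ bound on $\alpha_{2,s}^{2}/\alpha_{3,s}$ in the second paragraph. In the non-proximal case $\alpha_{3,0}=\Theta(1)$ and the two one-line bounds already suffice, but here $\alpha_{3,0}=\tfrac{1}{2B}$ is small, so one must go through the telescoping identity in order that only a \emph{single} factor $B$ survives in the final rate (entering through $\alpha_{3,0}/\alpha_{2,0}^{2}=2B$ and through $B^{2}/m=B/n$ in the last term) rather than $B^{2}$. A secondary, routine point is verifying $L_Q=\OM(L_{max})$, i.e. that the effective smoothness packaged into $\bar{L}_s$ is dominated by $\tilde{L}=\max\{BL_B,L_{max}\}$; this is what lets the quadratic term be written as $\tilde{L}/n$.
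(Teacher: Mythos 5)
Your proposal is correct, and it actually supplies a step that the paper itself leaves implicit and that its stated justification does not cover. The paper derives the corollary by citing only two facts about the sequences, namely $\alpha_{3,s}\ge\alpha_{3,0}$ and $\alpha_{2,s}\le 2/(s+\nu)$; with $\alpha_{2,0}=\alpha_{3,0}=\tfrac{1}{2B}$ (so $\nu=4B$) these give only $\alpha_{2,s}^2/\alpha_{3,s}\le 8B/(s+4B)^2$, and since the bracket in Theorem \ref{theorem: proximal} is itself of order $B$ under $m=Bn$, this route yields an $\OM(B^2/s^2)$ bound rather than the claimed $\OM(B/s^2)$ --- exactly the spurious factor you point out. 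Your telescoping identity $\frac{\alpha_{3,s+1}}{\alpha_{2,s+1}^2}=\frac{1}{\alpha_{2,s}}+\frac{\alpha_{3,s}}{\alpha_{2,s}^2}$ (which follows from $1-\alpha_{1,s}=\alpha_{2,s}+\alpha_{3,s}$ together with the relation $\frac{1-\alpha_{1,s}}{\alpha_{2,s}^2}=\frac{\alpha_{3,s+1}}{\alpha_{2,s+1}^2}$ already used in the theorem proofs) gives $\frac{\alpha_{3,s}}{\alpha_{2,s}^2}\ge 2B+\sum_{t=0}^{s-1}\frac{t+4B}{2}\ge \frac{s^2}{4}$, hence a $B$-free $\alpha_{2,s}^2/\alpha_{3,s}\le 4/s^2$, and this is what makes the single factor of $B$ --- and hence the paper's advertised $\sqrt{B}$ overhead in the overall complexity --- come out. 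The remaining substitutions ($\alpha_{3,0}/\alpha_{2,0}^2=2B$, $\bar L_0 B^2/(2m)=(2L_Q+L_B)B/(2n)$) match what the paper's bound requires. The only soft spot, which you flag yourself, is the identification $L_Q=\OM(\tilde L)$: the paper never defines $L_Q$, so this cannot be verified from the text, but it is forced by consistency with the non-proximal corollary and with the stated overall complexity, so assuming it is the right call.
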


	\subsection{Overall Complexity}
	Combining with the analysis of per-iteration complexity in the previous section and the convergence rate discussed above, the overall computational complexity of AVRBCD in the sparse ERM problems is $\OM({nd}/{\sqrt{\epsilon}}+{d\sqrt{\tilde{L}n}}/{\sqrt{\epsilon}})$ for non-proximal case, and $\OM({nd\sqrt{B}}/{\sqrt{\epsilon}}+{d\sqrt{\tilde{L}nB}}/{\sqrt{\epsilon}})$
	for proximal case.	
	This is the similar to Katyusha, i.e. $\OM({nd}/{\sqrt{\epsilon}}+{d\sqrt{L_{max}n}}/{\sqrt{\epsilon}})$ in both proximal or non-proximal case, and is better than $\OM({ndBL_B}/{\sqrt{\epsilon}})$ in APPROX and $\OM((nd+dL_{max}/\epsilon)\log\frac{1}{\epsilon})$ in MRBCD II.
\section{Experiments}
	\begin{table}[t]
		\centering
		\caption{Statistics of datasets.}
		\small
		\begin{tabular}{|c|c|c|c|}
			\hline
			Dataset      &     n     &    d     & sparsity \\ \hline
			real-sim        & $72,309$  &  $20,958$   &    $0.24\%$    \\ \hline
			rcv1       & $20,242$  &   $47,236$   &    $0.16\%$    \\ \hline
			news20.binary        & $19,996$  &  $1,355,191$   &    $0.0336\%$    \\ \hline
		\end{tabular}
		\label{table: statistics}
	\end{table}
	In this section, we present results of several numerical experiments to validate our analysis for AVRBCD and to show the effectiveness of AVRBCD with active set (AVRBCD-AC) on real problems.
	Empirical studies on $l_1$-logistic regression and $l_1l_2$-logistic regression are conducted.
	Three large scale datasets from LibSVM are used, namely real-sim, rcv1, and new20.binary, all of which have large number of samples and features ($n, d > 10^4$).
	The statistics are given in Table \ref{table: statistics} along with the sparsity of the datasets.	
	Katyusha with opt.II~\cite{allen2016katyusha}, MRBCD II and III \cite{zhao2014accelerated}, and SVRG \cite{johnson2013accelerating} are included in comparison.
	We use the parameter suggested in the original paper for Katyusha.
	For SVRG, we set the inner loop count $m = n$ and the step size $\eta = {1}/{2L_{max}}$.
	We also incorporate the mini-batch technique in our methods and set the mini batch size to $8$, same as that of MRBCD II and III.
	The step sizes for MRBCD II and III are set to ${4}/{L_{max}}$ which gives the best performance in our experiments.
	For AVRBCD and AVRBCD-AC, the step sizes are set to ${4}/{L_{max}\alpha_{2,s}}$ which increases are iteration goes on, similar to Katyusha.
	For MRBCD II and III, AVRBCD, and AVRBCD-AC, we set $m = nB/8$.
	As for initialization, $\xB_0$ is set to zero in all experiments.
	We define the \emph{log-suboptimality} at $\xB$ as $\log_{10} {(\FB^\PB(\xB)-\FB^\PB(\xB^*))}$ and the \emph{effective pass} as the evaluation of $nd$ component partial gradients.
	These quantities are used to evaluate the performance of the algorithms \cite{zhao2014accelerated}.
	Due to the randomness of the algorithms, the reported results are the average of 10 independent trials.
	\begin{figure*}[t]
		\centering
		\begin{tabular}{c@{}c@{}c}
			\includegraphics[width = .33\columnwidth]{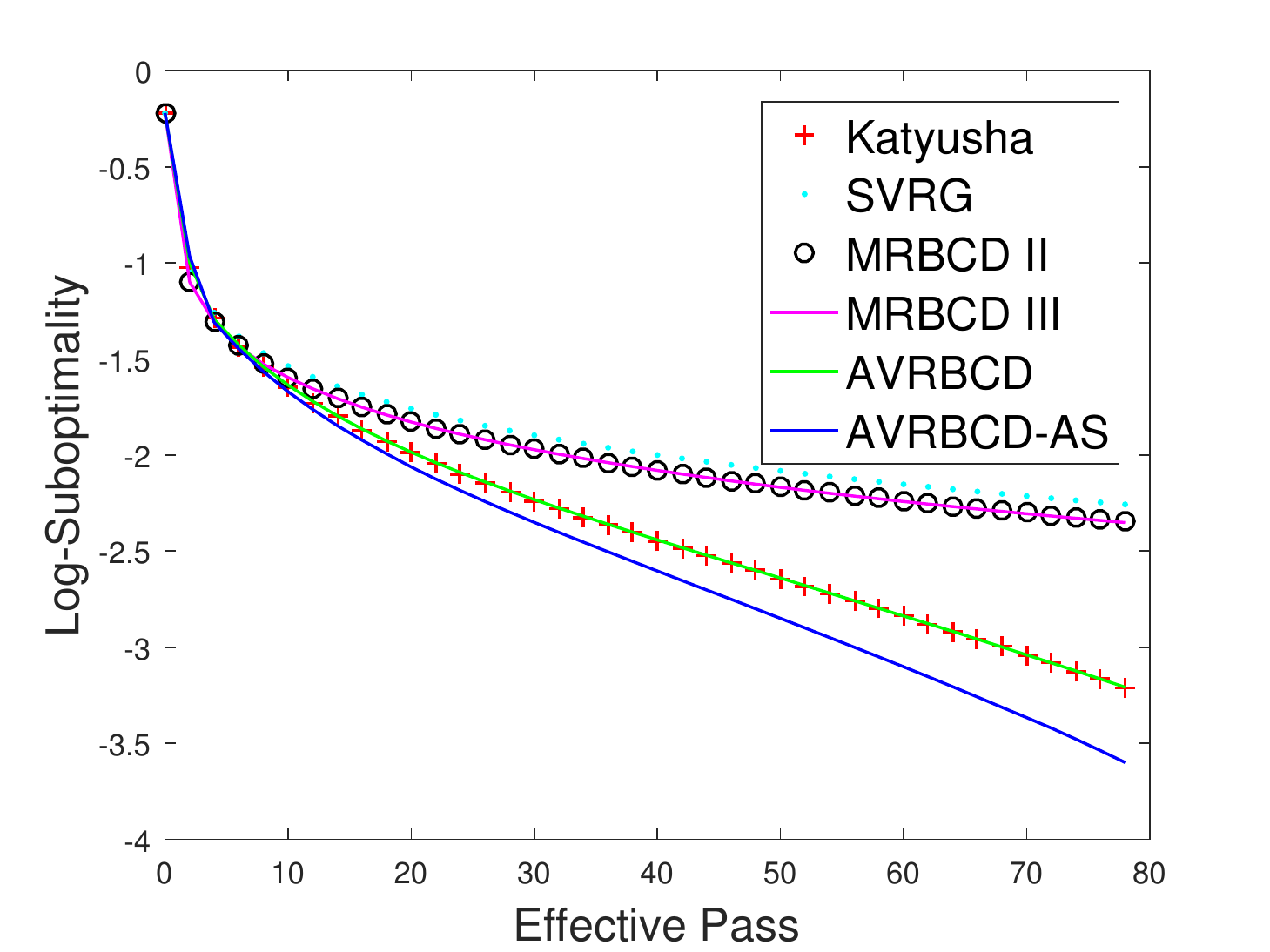} 
			\includegraphics[width = .33\columnwidth]{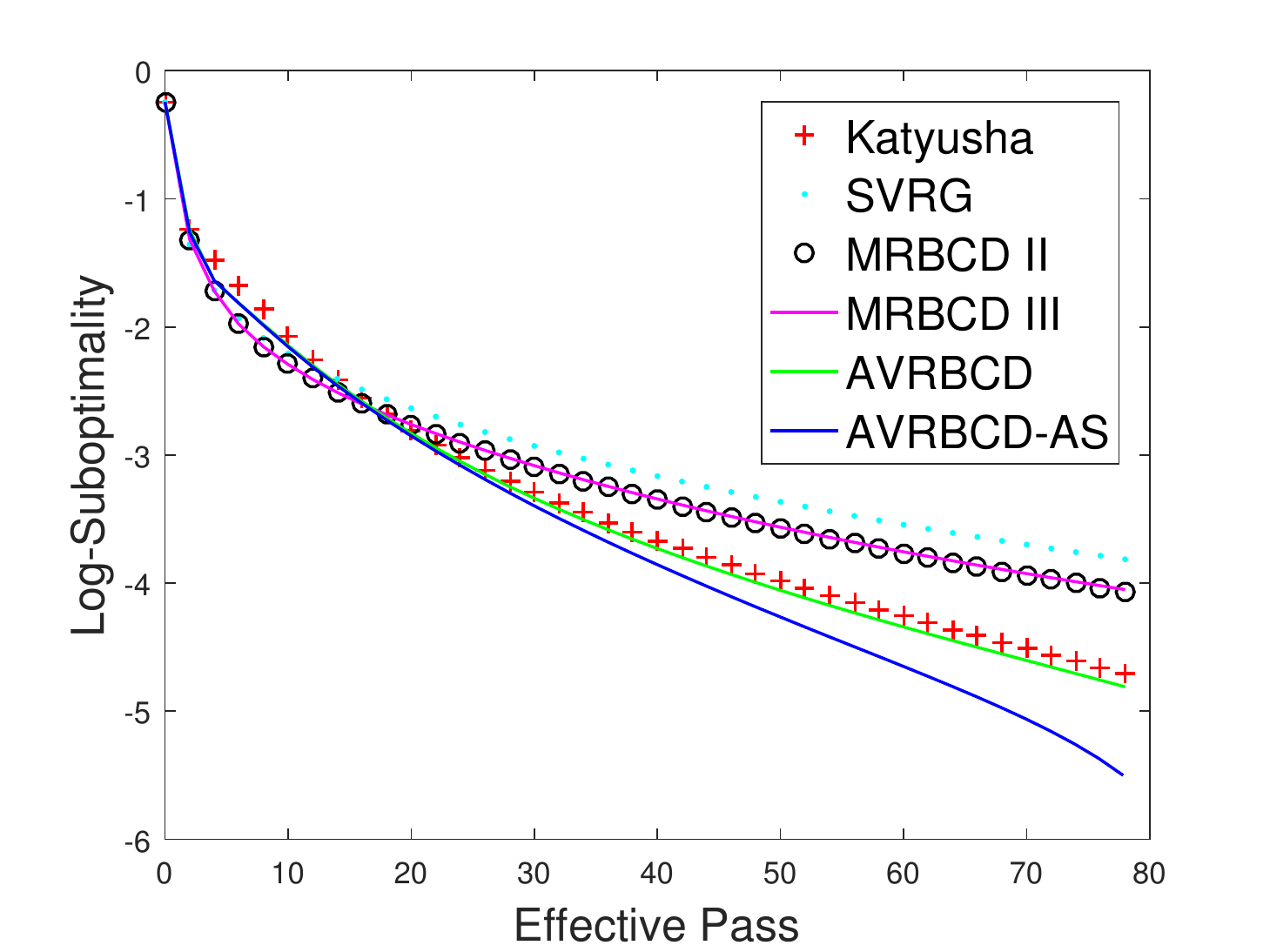} 
			\includegraphics[width = .33\columnwidth]{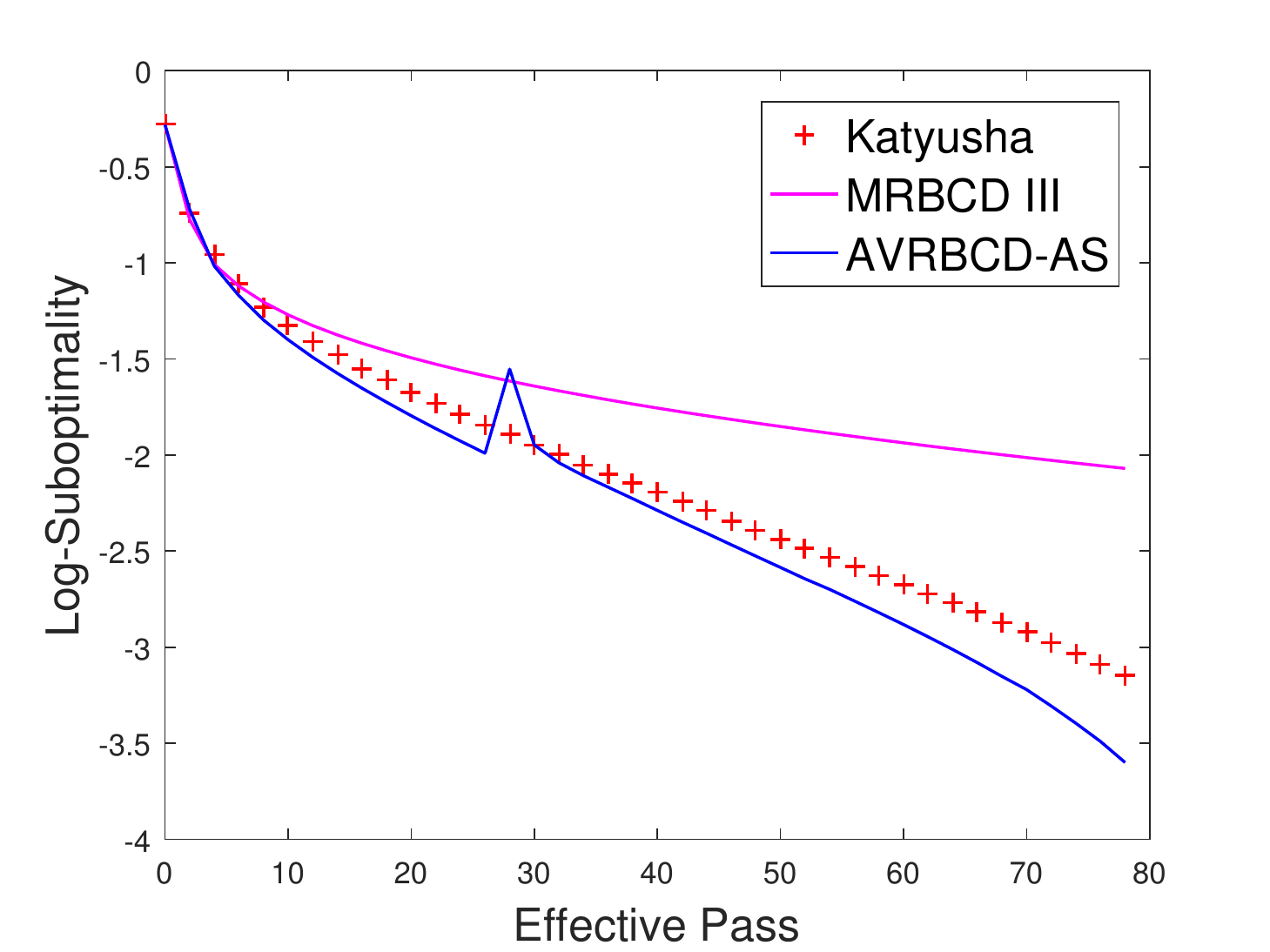}
		\end{tabular}
		\caption{$l_1$-Logistic Regression. From left to right are results on rcv1, real\_sim, and news20.binary}
		\label{fig: l1 LR}
	\end{figure*}
	\begin{figure}[t]
		\begin{tabular}{c@{}c}
			\includegraphics[width = .5\columnwidth]{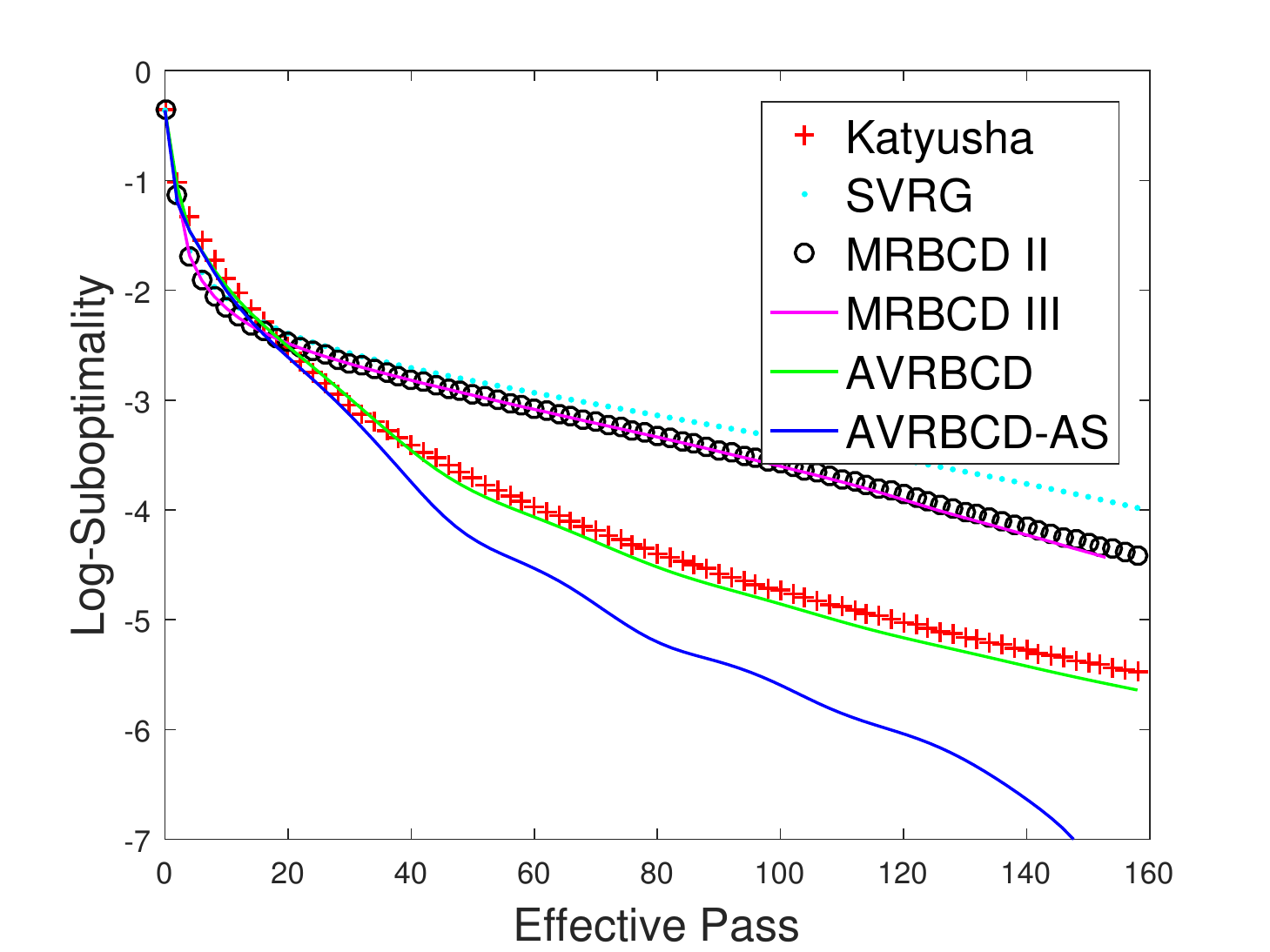} 
			\includegraphics[width = .5\columnwidth]{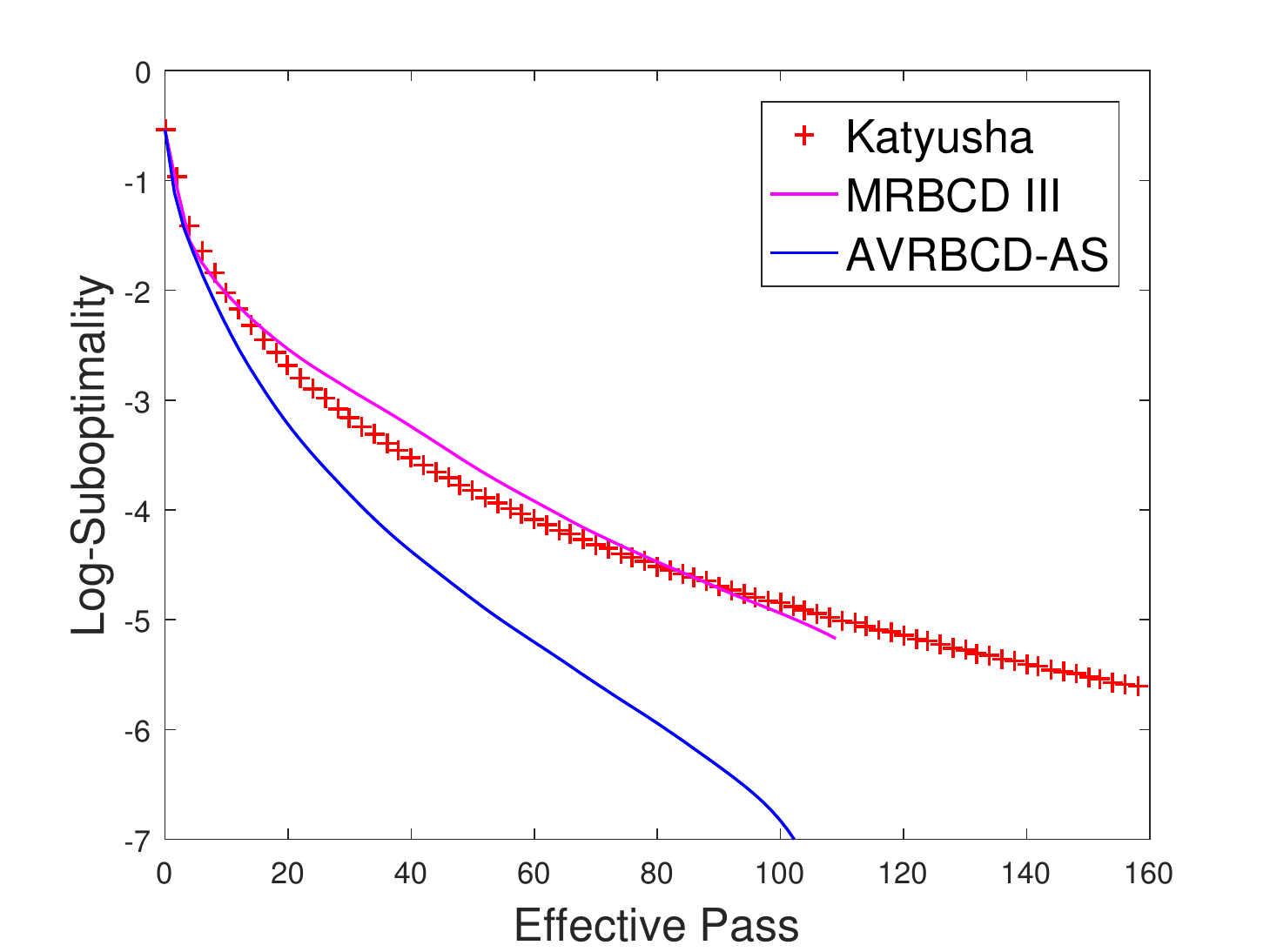}
		\end{tabular}
		\caption{$l_1l_2$-Logistic Regression. From left to right are results on rcv1 and news20.binary}
		\label{fig: l1l2 LR}
	\end{figure}
	\subsection{$l_1$-Logistic Regression}
	Three datasets are used in $l_1$-Logistic Regression, namely rcv1, real-sim, and news20.binary.
	Here, the component function is $f_i(\xB) = \log(1+\exp(-y_i\aB_i^\top\xB))$ and the regularization function is $\PB(\xB) = \lambda_1\|\xB\|_1$, where $(\aB_i, y_i)$ correspond to the feature vector and response of the $i^{th}$ sample respectively.
	In all experiments, $\lambda_1$ is set to $10^{-5}$.
	We compare the convergence rate in Figure \ref{fig: l1 LR}.
	The result shows that (i) Katyusha and AVRBCD have the similar amount of overall partial gradient evaluations, conforming to our analysis, and (ii) AVRBCD-AS has the best performance among all competitors.
	Since MRBCD III has the best performance among all non-accelerated methods, we only include MRBCD III, Katyusha, and AVRBCD III in our experiment on news20.binary.
	\subsection{$l_1l_2$-Logistic Regression}	
	Rcv1 and news20.binary are used to test the performance of our methods in $l_1l_2$-Logistic Regression.
	We set $f_i(\xB) = \log(1+\exp(-y_i\aB_i^\top\xB)) + \frac{\lambda_2}{2}\|\xB\|^2$	and $\PB(\xB) =\lambda_1\|\xB\|_1$, as suggested in {\cite{xiao2014proximal}}.
	In all experiments, $\lambda_1$ is set to $10^{-4}$ and $\lambda_2$ is set to $10^{-8}$.
	The results shows the similar phenomenon as in $l_1$-Logistic Regression, and our methods have the best computational efficiency.
\section{Conclusion}
	In this paper, we proposed an accelerated variance reduced block coordinate descent algorithm that can handle problems with large number of samples in ultra-high dimensional space.
	We compare our algorithms to state of the arts in large scale sparse learning problems, and the result is outstanding.
\small
\bibliographystyle{plain}
\bibliography{ADVRG}
\section{Appendix}
This appendix gives the proof for Lemma 1 and Theorem 2 in the AAAI paper.
	\section{Proof of Lemma 1}
	\begin{lemma}
		\label{lemma: expectation}
		\begin{align}
			\EBB_k[\|\zB_k - \xB\|^2] =~& \frac{1}{m}\|\tilde{\zB}_k - \xB\|^2 + \frac{m-1}{m}\|\zB_{k-1}-\xB\|^2 \\
			\EBB_k[\PB(\zB_k)] =~& \frac{1}{m}\PB(\tilde{\zB}_k) + \frac{m-1}{m}\PB(\zB_{k-1}) \label{eqn: regularization expectation}
		\end{align}
	\end{lemma}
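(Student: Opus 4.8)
\emph{Proof idea.} The plan is to exploit the block structure of Algorithm~\ref{alg: AVRBCD I}: at iteration $k$ the iterate $\zB_k$ is obtained from $\zB_{k-1}$ by overwriting a single, uniformly sampled coordinate block $l$ with a proximal step while leaving the other $B-1$ blocks unchanged. First I would introduce the auxiliary ``fully updated'' vector $\tilde{\zB}_k$, defined blockwise by $[\tilde{\zB}_k]_{l'} = \prox_{\eta_s\PB_{l'}}([\zB_{k-1} - \eta_s\vB_k]_{l'})$ for every $l' \in \{1,\ldots,B\}$. Because $\PB$ is separable over the partition and $\vB_k$ does not depend on $l$, this $\tilde{\zB}_k$ is a well-defined quantity once the history through iteration $k-1$ and the sampled component $i$ are fixed, and it satisfies $[\zB_k]_l = [\tilde{\zB}_k]_l$ on the selected block and $[\zB_k]_{l'} = [\zB_{k-1}]_{l'}$ off it.

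Next, using separability of the squared Euclidean norm across blocks, I would expand
\begin{align*}
\|\zB_k - \xB\|^2 &= \|[\tilde{\zB}_k]_l - [\xB]_l\|^2 + \sum_{l' \neq l}\|[\zB_{k-1}]_{l'} - [\xB]_{l'}\|^2 \\
&= \|\zB_{k-1} - \xB\|^2 + \|[\tilde{\zB}_k]_l - [\xB]_l\|^2 - \|[\zB_{k-1}]_l - [\xB]_l\|^2,
\end{align*}
and then take $\EBB_k$, the conditional expectation over the uniformly sampled block index (all other randomness held fixed). Since each block is selected with probability $1/B$ and $\frac{1}{B}\sum_{l=1}^{B}\|[\tilde{\zB}_k]_l - [\xB]_l\|^2 = \frac{1}{B}\|\tilde{\zB}_k - \xB\|^2$, with the analogous identity for $\zB_{k-1}$, the two correction terms collapse and leave $\frac{1}{B}\|\tilde{\zB}_k - \xB\|^2 + \frac{B-1}{B}\|\zB_{k-1} - \xB\|^2$; this is the asserted convex combination, the denominator written $m$ in the statement being the number of coordinate blocks. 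The second identity follows from the identical computation with $\PB(\cdot) = \sum_{l'}\PB_{l'}([\cdot]_{l'})$ substituted for $\|\cdot - \xB\|^2$, using that $[\tilde{\zB}_k]_{l'}$ is precisely the $l'$-th block of the full proximal update, so that $\frac{1}{B}\sum_{l'}\PB_{l'}([\tilde{\zB}_k]_{l'}) = \frac{1}{B}\PB(\tilde{\zB}_k)$.

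The argument is essentially bookkeeping, so I do not anticipate a genuine obstacle. The point that most needs care is justifying that the per-block proximal update truly decouples --- i.e., that one may speak of $[\tilde{\zB}_k]_l$ independently of which block was actually selected --- which is exactly where separability of $\PB$ is invoked; and being explicit that $\EBB_k$ averages over the block index alone, conditioning on the sampled $i$ and on all prior iterates.
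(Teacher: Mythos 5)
Your argument is correct and is exactly the standard blockwise decomposition that the paper leaves implicit---the lemma is stated in the appendix without any proof at all, so there is nothing to diverge from. You also correctly resolve the paper's notational slip (the denominator written $m$ in the statement must be the number of blocks $B$; the paper itself switches to $B$ when it applies the lemma), and your emphasis on why $\tilde{\zB}_k$ is well-defined independently of the sampled block---separability of $\PB$ plus the fact that $\vB_k$ depends only on $i$, not on $l$---is precisely the point worth making explicit.
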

	\begin{lemma}
		\label{lemma: convex combination}
		In Algorithm I, by setting $\alpha_{2, 0} = \alpha_{3, 0} =1/2B$, for $k = sm+j\geq 1$, we have
		\begin{equation}
		\xB_k = \sum_{i=0}^{s-1}\lambda_k^i \tilde{\xB}^i + \beta_j^s \tilde{\xB}^s + \sum_{l=0}^{k} \gamma_k^l \zB_l
		\end{equation}
		where $\gamma_0^0 = 1$, $\gamma_1^0 = \frac{1}{2} - \frac{1}{2B}$, $\gamma_1^1 = \frac{1}{2}$, $\beta^0_0 = 0$, $\lambda_{(s+1)m}^s = \beta_m^s$, $\lambda^i_{k+1} = \alpha_{1,s}\lambda^i_k $
		\begin{equation*}
		\gamma_{k+1}^l = 
		\begin{cases}
		\alpha_{1,s}\gamma_k^l,~&l=0,\ldots,k-1\\
		B\alpha_{1,s}\alpha_{2,s}+(1-B)\alpha_{2,s},~&l=k \\
		B\alpha_{2,s},~&l=k+1
		\end{cases}
		\end{equation*}
		and
		\begin{equation}
		\beta_{j+1}^s = \alpha_{1,s}\beta_j^s+\alpha_{3,s}.
		\end{equation}
		Additionally, we have $\sum_{i=0}^{s-1}\lambda_k^i + \beta_j^s + \sum_{l=0}^k \gamma_k^l = 1$ and each entry in this sum is non-negative for all $k\geq1$, i.e. $\xB_k$ is a convex combination of $\{\tilde{\xB}^i\}_{i=0}^s$ and $\{\zB_l\}_{l=0}^k$.
	\end{lemma}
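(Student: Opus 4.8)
The plan is to establish the representation by induction on the global index $k=sm+j$, after collapsing the two coupling steps of Algorithm~\ref{alg: AVRBCD I} into a single affine recurrence. Substituting line~\ref{eqn: coupling I} into line~\ref{eqn: coupling II} gives, for $k$ in epoch $s$,
\begin{equation*}
	\xB_k=\alpha_{1,s}\xB_{k-1}+\alpha_{2,s}(1-B)\zB_{k-1}+\alpha_{3,s}\tilde{\xB}^s+\alpha_{2,s}B\zB_k ,
\end{equation*}
so that passing from $\xB_{k-1}$ to $\xB_k$ only introduces $\tilde{\xB}^s$, the old point $\zB_{k-1}$, and the fresh point $\zB_k$. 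This is exactly the structure needed to carry a representation $\xB_k=\sum_{i<s}\lambda_k^i\tilde{\xB}^i+\beta_j^s\tilde{\xB}^s+\sum_{l\le k}\gamma_k^l\zB_l$ from one iteration to the next.

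For the base case $k=1$ (so $s=0$, $j=1$) I would use $\xB_0=\zB_0=\tilde{\xB}^0$ to absorb the $\alpha_{1,0}\xB_0$ term into the coefficient of $\zB_0$; with $\alpha_{2,0}=\alpha_{3,0}=\tfrac1{2B}$ this produces $\gamma_1^0=\alpha_{1,0}+(1-B)\alpha_{2,0}=\tfrac12-\tfrac1{2B}$, $\gamma_1^1=B\alpha_{2,0}=\tfrac12$ and $\beta_1^0=\alpha_{3,0}$, matching the stated initialization. For the inductive step I split into two cases. \emph{Within an epoch} ($k$ and $k+1$ both in epoch $s$): substitute the representation of $\xB_k$ into the affine recurrence and read off coefficients. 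Every $\lambda_k^i$ ($i<s$) and $\gamma_k^l$ ($l<k$) is scaled by $\alpha_{1,s}$; the coefficient of $\tilde{\xB}^s$ becomes $\alpha_{1,s}\beta_j^s+\alpha_{3,s}=\beta_{j+1}^s$; the coefficient of $\zB_k$ becomes $\alpha_{1,s}\gamma_k^k+(1-B)\alpha_{2,s}$, which equals $B\alpha_{1,s}\alpha_{2,s}+(1-B)\alpha_{2,s}$ because $\gamma_k^k=B\alpha_{2,s}$ directly from line~\ref{eqn: coupling II}; and $\zB_{k+1}$ enters with coefficient $B\alpha_{2,s}$ --- precisely the claimed recurrences. \emph{Across an epoch boundary} ($k=(s+1)m$, $k+1=(s+1)m+1$): the same substitution runs with the epoch-$(s+1)$ constants, the new snapshot $\tilde{\xB}^{s+1}$ appears with coefficient $\alpha_{3,s+1}$ (so $\beta_0^{s+1}=0$, $\beta_1^{s+1}=\alpha_{3,s+1}$), and the old snapshot's coefficient $\beta_m^s$ is relabelled as $\lambda^s$ and subsequently decays by $\alpha_{1,s+1}$ --- this is the role of the convention $\lambda_{(s+1)m}^s=\beta_m^s$. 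The only subtlety here is that the freshly introduced $\zB$-coefficient at a boundary mixes the old and new $\alpha_2$'s, so the closed forms for $\gamma$ are to be read as the within-epoch rules.

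The normalization $\sum_i\lambda_k^i+\beta_j^s+\sum_l\gamma_k^l=1$ comes from a parallel induction: the affine recurrence sends a representation of total mass $M$ to one of mass $\alpha_{1,s}M+\alpha_{2,s}(1-B)+\alpha_{3,s}+\alpha_{2,s}B=\alpha_{1,s}M+\alpha_{2,s}+\alpha_{3,s}$, which is $1$ whenever $M=1$ (the epoch transition behaves the same way), and $M=1$ at $k=1$ was checked above. Non-negativity of the $\lambda_k^i$ and $\beta_j^s$ is immediate, since $\lambda_{k+1}^i=\alpha_{1,s}\lambda_k^i$ and $\beta_{j+1}^s=\alpha_{1,s}\beta_j^s+\alpha_{3,s}$ are non-negative combinations of non-negative quantities.

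\textbf{The main obstacle is non-negativity of the $\gamma$ coefficients.} The only one that is not manifestly non-negative is the ``lagged'' coefficient
\begin{equation*}
	\gamma_{k+1}^{k}=\alpha_{1,s}\gamma_k^k+(1-B)\alpha_{2,s}=\alpha_{2,s}\bigl(1-B(\alpha_{2,s}+\alpha_{3,s})\bigr),
\end{equation*}
since $\gamma_{k+1}^{k+1}=B\alpha_{2,s}\ge0$ is trivial and $\gamma_{k+1}^{l}=\alpha_{1,s}\gamma_k^{l}$ propagates non-negativity to all earlier $\gamma$'s once this one term is controlled. Making $\gamma_{k+1}^{k}\ge0$ requires the bound $B(\alpha_{2,s}+\alpha_{3,s})=B(1-\alpha_{1,s})\le1$, and this is exactly where the specific initialization $\alpha_{2,0}=\alpha_{3,0}=\tfrac1{2B}$ is used --- it makes the bound tight at $s=0$ --- together with the relations between the $\{\alpha_{i,s}\}$ recorded near Theorem~\ref{theorem: non-proximal}, notably $\alpha_{1,s}=\tfrac{\alpha_{1,0}}{\alpha_{2,0}^2}\alpha_{2,s}^2$ and the monotonicity of $\alpha_{2,s}$. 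I expect this sign control of $\gamma_{k+1}^{k}$ to be the only genuinely delicate estimate; everything else is bookkeeping around the affine recurrence.
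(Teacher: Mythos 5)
Your derivation of the representation itself is essentially the paper's own argument: collapse lines \ref{eqn: coupling I} and \ref{eqn: coupling II} into the single affine recurrence $\xB_k=\alpha_{1,s}\xB_{k-1}+(1-B)\alpha_{2,s}\zB_{k-1}+\alpha_{3,s}\tilde{\xB}^s+B\alpha_{2,s}\zB_k$, verify the base case using $\xB_0=\zB_0=\tilde{\xB}^0$, and read off the coefficient recurrences by induction, with the snapshot coefficient relabelled as $\lambda^s$ at each epoch boundary. That part, together with your mass-conservation induction for $\sum_i\lambda_k^i+\beta_j^s+\sum_l\gamma_k^l=1$, is correct and matches the paper.

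The gap is in the step you yourself single out as the main obstacle, and it is not merely delicate --- it fails. You need $\gamma_{k+1}^{k}=\alpha_{2,s}\bigl(1-B(1-\alpha_{1,s})\bigr)\ge 0$, i.e.\ $\alpha_{1,s}\ge 1-\frac{1}{B}$. With $\alpha_{2,0}=\alpha_{3,0}=\frac{1}{2B}$ this holds with equality at $s=0$ (so $\gamma_{k+1}^{k}=0$ there), but the very relations you invoke destroy it for $s\ge 1$: since $\alpha_{2,s}$ is positive and strictly decreasing, $\alpha_{1,s}=\alpha_{1,s-1}(1-\alpha_{2,s})<\alpha_{1,s-1}$, hence $\alpha_{1,s}<\alpha_{1,0}=1-\frac{1}{B}$ and $B(1-\alpha_{1,s})>1$ for every $s\ge 1$. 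Equivalently, the identity $\alpha_{1,s}=\frac{\alpha_{1,0}}{\alpha_{2,0}^2}\alpha_{2,s}^2$ you cite shows $\alpha_{1,s}$ decays with $\alpha_{2,s}^2$, which only makes the inequality worse. So the sign control you hope to extract from the initialization cannot be extracted: the lagged coefficient $\gamma_{k+1}^{k}$ is strictly negative in every epoch after the first (and the boundary variant $\alpha_{1,s}B\alpha_{2,s-1}+(1-B)\alpha_{2,s}$ suffers the same fate once $\alpha_{2,s-1}/\alpha_{2,s}$ is close to $1$). Your proof of the non-negativity clause therefore cannot be completed along the lines you sketch. For what it is worth, the paper's own proof of this lemma establishes only the coefficient recurrences by the same induction and is entirely silent on non-negativity, so it offers no repair either; if the convex-combination claim is to be salvaged, it requires a genuinely different argument (e.g.\ a different grouping of the $\zB_l$ terms) rather than a termwise bound on $\gamma_{k+1}^{k}$.
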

	\begin{proof}
		When $s=0$, 
		\begin{align*}
			\xB_0 &= \zB_0 \\
			\yB_1 &= \alpha_{1, 0} \zB_0 + \alpha_{2, 0}\zB_0 + \alpha_{3, 0}\tilde{\xB}\\
			\xB_1 &= (\alpha_{1, 0} +\alpha_{2, 0})\zB_0 + B\alpha_{2, 0}(\zB_1 - \zB_0) + \alpha_{3, 0}\tilde{\xB}^0\\
					& = (\frac{1}{2} - \alpha_{3, 0})\zB_0 + \frac{1}{2}\zB_1 + \alpha_{3, 0}\tilde{\xB}^0
		\end{align*}
		which proves the initialization.
		We prove by induction.
		Assume that our formulation is correct up till the $\kappa^{th}$ iteration
		In the following iterations,
		\begin{align*}
			\yB_{\kappa+1} =~& \alpha_{1, 0} \xB_\kappa + \alpha_{2, 0}\zB_\kappa + \alpha_{3, 0}\tilde{\xB}\\
			\xB_{\kappa+1} =~& \alpha_{1, 0} \xB_\kappa + \alpha_{2, 0}\zB_\kappa + \alpha_{3, 0}\tilde{\xB} + \alpha_{2, 0}B(\zB_{\kappa+1} - \zB_\kappa)\\
			=~& \alpha_{1, 0} \sum_{l=1}^{\kappa -1} \gamma_\kappa^l \zB_l + (B\alpha_{1,s}\alpha_{2,s}+(1-B)\alpha_{2,s})\zB_\kappa \\
			&+ B\alpha_{2,s}\zB_{\kappa+1} + (\alpha_{1,0}\beta_j^0+\alpha_{3,0})\tilde{\xB}
		\end{align*}
		which gives us the result.
		When $s \geq 1$, the same induction holds except the additional $\sum_{i=0}^{s-1}\lambda_k^i \tilde{\xB}^i$. See that $\tilde{\xB}^i$ is only added after the $i^th$ epoch is done. So it should be initialized as $\lambda^i_{k+1} = \alpha_{1,s}\lambda^i_k $.
	\end{proof}
	\section{Proof of Theorem 2}
	Suppose in the $k^{th}$ iteration, function $i_k$ is sampled from all $n$ subfunctions and block $l_k$ is sampled from all $B$ blocks.
	Define $\tilde{\zB}_k$ to be the vector if all $B$ blocks are updated in the $k^{th}$ iteration, i.e. $\tilde{\zB}_k =\prox_{\eta_s\PB_l}(\zB_{k-1} - \alpha_s \vB_k)$. 
	Clearly, for all $l \in [B]$ we have 
	\begin{equation}
	[\zB_k]_l = 
	\begin{cases}
	[\tilde{\zB}_k]_l & l = l_k,\\
	[\zB_{k-1}]_l & l \neq l_k.
	\end{cases}
	\end{equation}
	\begin{theorem} \label{theorem: proximal}
		By setting $\alpha_{2, 0} = \alpha_{3, 0} = \frac{1}{2B}$ and $\alpha_{1, 0} = 1 - \alpha_{2, 0} - \alpha_{3, 0}$, we have
		\begin{equation*}
		d(\tilde{\xB}^s) \leq \frac{\alpha_{2,s}^2}{\alpha_{3,s}}(\frac{\alpha_{1,0}}{\alpha_{2,0}^2}\frac{d(\xB_0)}{m} + \frac{\alpha_{3,0}}{\alpha_{2,0}^2} d(\xB_0) + \frac{\bar{L}_0B^2}{2m}\|\xB^* - \xB_0\|^2)
		\end{equation*}
	\end{theorem}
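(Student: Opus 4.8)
The plan is to run the same potential-function telescoping as in the proof of Theorem~\ref{theorem: non-proximal}, but starting from Lemma~\ref{lemma: key} and carefully carrying the surrogate $\hat{d}_k$ in place of $d(\xB_k)$. Throughout, all expectations are taken over the history randomness and suppressed; write $\hat{d}_k$ for $\hat{d}(\xB_k)$ and $\tilde{d}_s$ for $d(\tilde{\xB}^s)$. First I would divide Lemma~\ref{lemma: key} by $\alpha_{2,s}^2$ and sum over $k = sm+1,\ldots,(s+1)m$. The terms $\|\xB^*-\zB_k\|^2$ telescope within the epoch, and the summed right-hand side $\frac{\alpha_{1,s}}{\alpha_{2,s}^2}\sum_{j=0}^{m-1}\hat{d}_{sm+j}$ overlaps the left-hand side $\frac{1}{\alpha_{2,s}^2}\sum_{j=1}^{m}\hat{d}_{sm+j}$ except for the $j=0$ and $j=m$ endpoints, so after rearranging,
\begin{align*}
\frac{\alpha_{1,s}}{\alpha_{2,s}^2}\hat{d}_{sm+m} &+ \frac{1-\alpha_{1,s}}{\alpha_{2,s}^2}\sum_{j=1}^{m}\hat{d}_{sm+j} \leq \frac{\alpha_{1,s}}{\alpha_{2,s}^2}\hat{d}_{sm} + \frac{\alpha_{3,s}}{\alpha_{2,s}^2}m\tilde{d}_s\\
&+ \frac{\bar{L}_sB^2}{2}\big(\|\xB^*-\zB_{sm}\|^2 - \|\xB^*-\zB_{sm+m}\|^2\big).
\end{align*}

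Next I would apply the same step-size identities used for Theorem~\ref{theorem: non-proximal}, namely $\frac{1-\alpha_{1,s}}{\alpha_{2,s}^2}=\frac{\alpha_{3,s+1}}{\alpha_{2,s+1}^2}$ and $\frac{1}{\alpha_{2,s}^2}=\frac{1-\alpha_{2,s+1}}{\alpha_{2,s+1}^2}$, hence $\frac{\alpha_{1,s}}{\alpha_{2,s}^2}=\frac{\alpha_{1,s+1}}{\alpha_{2,s+1}^2}$. The only new ingredient is the chain $m\tilde{d}_{s+1}\leq\sum_{j=1}^{m}d(\xB_{sm+j})\leq\sum_{j=1}^{m}\hat{d}_{sm+j}$: the first inequality is because $\tilde{\xB}^{s+1}$ is drawn uniformly from $\{\xB_{sm+j}\}_{j=1}^{m}$, the second is the pointwise bound $d(\xB_k)\leq\hat{d}_k$. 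Together with $\bar{L}_{s+1}\leq\bar{L}_s$ this shows that
\[
\Phi_s \defi \tfrac{\alpha_{1,s}}{\alpha_{2,s}^2}\hat{d}_{sm} + \tfrac{\alpha_{3,s}}{\alpha_{2,s}^2}m\tilde{d}_s + \tfrac{\bar{L}_sB^2}{2}\|\xB^*-\zB_{sm}\|^2
\]
satisfies $\Phi_{s+1}\leq\Phi_s$, and hence $\Phi_s\leq\Phi_0$ for every $s$.

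Finally, since the two terms $\frac{\alpha_{1,s}}{\alpha_{2,s}^2}\hat{d}_{sm}$ and $\frac{\bar{L}_sB^2}{2}\|\xB^*-\zB_{sm}\|^2$ are non-negative, $\frac{\alpha_{3,s}}{\alpha_{2,s}^2}m\tilde{d}_s\leq\Phi_s\leq\Phi_0$; using $\hat{d}_0=d(\xB_0)$, $\tilde{d}_0=d(\tilde{\xB}^0)=d(\xB_0)$ and $\zB_0=\xB_0$ to evaluate $\Phi_0=\frac{\alpha_{1,0}}{\alpha_{2,0}^2}d(\xB_0)+\frac{\alpha_{3,0}}{\alpha_{2,0}^2}m\,d(\xB_0)+\frac{\bar{L}_0B^2}{2}\|\xB^*-\xB_0\|^2$ and dividing by $\frac{\alpha_{3,s}}{\alpha_{2,s}^2}m$ yields exactly the claimed inequality. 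The main obstacle here is not this telescoping, which is almost verbatim the non-proximal argument, but the bookkeeping around the surrogate: Lemma~\ref{lemma: key} must carry $\hat{d}$ on \emph{both} sides, because only then is the per-epoch sum of surrogates large enough to dominate $m\tilde{d}_{s+1}$ through $d(\xB_k)\leq\hat{d}_k$ while still being controlled by the previous epoch's $\hat{d}_{sm}$. This in turn is why Lemma~\ref{lemma: convex combination} is needed — expressing $\xB_k$ as an explicit convex combination of the snapshots $\tilde{\xB}^i$ and the iterates $\zB_l$ is what legitimizes $\hat{\PB}(\xB_k)$ as a Jensen upper bound on $\PB(\xB_k)$ — and the choice $\alpha_{2,0}=\alpha_{3,0}=\frac{1}{2B}$ forced by that lemma is precisely the origin of the extra $\sqrt{B}$ factor in the proximal complexity.
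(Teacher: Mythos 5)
Your argument is correct and follows essentially the same route as the paper: the appendix proof of Theorem~\ref{theorem: proximal} first establishes the per-iteration bound of Lemma~\ref{lemma: key} inline and then performs exactly your telescoping — dividing by $\alpha_{2,s}^2$, summing over the epoch, invoking $\frac{\alpha_{1,s}}{\alpha_{2,s}^2}=\frac{\alpha_{1,s+1}}{\alpha_{2,s+1}^2}$, the chain $m\tilde{d}_{s+1}\leq\sum_{j=1}^m d(\xB_{sm+j})\leq\sum_{j=1}^m\hat{d}_{sm+j}$, and $\bar{L}_{s+1}\leq\bar{L}_s$ to obtain a decreasing potential. Your closing remarks on why $\hat{d}$ must appear on both sides of Lemma~\ref{lemma: key} and on the role of Lemma~\ref{lemma: convex combination} accurately reflect the paper's reasoning.
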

	\begin{proof}
	From Lemma \ref{lemma: convex combination}, we have 
	\begin{equation}
		\label{eqn: convex combination}
		\PB(\xB_k) \leq \sum_{i=0}^{s-1}\lambda_k^i \PB(\tilde{\xB}^i) + \beta_k\PB(\tilde{\xB}) + \sum_{l=0}^{k} \gamma_k^l\PB(\zB_l) \defi \hat{\PB}(\xB_k).
	\end{equation}
	Using (\ref{eqn: regularization expectation}) in Lemma \ref{lemma: expectation}, we have
	\begin{align*}
		\EBB_k \hat{\PB}(\xB_k) =&  \sum_{i=0}^{s-1}\lambda_k^i \PB(\tilde{\xB}^i) + \beta_k\PB(\tilde{\xB}) + \sum_{l=0}^{k-1} \gamma_k^l\PB(\zB_l) + m\alpha_2\EBB_k \PB(\zB_k)\\
		=& \sum_{i=0}^{s-1}\lambda_k^i \PB(\tilde{\xB}^i)+ \beta_k\PB(\tilde{\xB}) + \sum_{l=0}^{k-1} \gamma_k^l\PB(\zB_l) \\
		&+ \alpha_2(m-1)\PB(\zB_{k-1})+\alpha_2\PB(\tilde{\zB}_k).
	\end{align*}
	Assume that in the $k^{th}$ iteration, the $l^{th}$ block is selected.
	Let $L_l$ be the Lipschitz smoothness parameter of function $\FB$ in the $l^{th}$ block and define $L_B = \max_l L_l$
	\begin{align*}
		&\FB(\xB_k) \\
		\leq~& \FB(\yB_k) + \langle[\nabla\FB(\yB_k)]_l, [\xB_k - \yB_k]_l\rangle + \frac{L_l}{2}\|[\xB_k - \yB_k]_l\|^2 \\
		=~& \FB(\yB_k) + \langle[\nabla\FB(\yB_k) - \vB_k]_l, [\xB_k-\yB_k]_l\rangle + \frac{L_l}{2}\|[\xB_k-\yB_k]_l\|^2 \\
		&+ \langle[\vB_k]_l, [\xB_k-\yB_k]_l\rangle\\
		\leq~& \FB(\yB_k) + \frac{L_Q}{2B\alpha_{3,s}}\|[\xB_k-\yB_k]_l\|^2 + \frac{B\alpha_{3,s}}{2L_Q}\|[\nabla \FB(\yB_k) - \vB_k]_l\|^2\\
		& + \frac{L_l}{2}\|[\xB_k-\yB_k]_l\|^2 + \langle[\vB_k]_l, [\xB_k-\yB_k]_l\rangle\\
		=~& \FB(\yB_k) + \frac{\bar{L}_s}{2}\|[\xB_k-\yB_k]_l\|^2 + \frac{B\alpha_{3,s}}{2L_Q}\|[\nabla \FB(\yB_k) - \vB_k]_l\|^2 \\
		&+ \langle[\vB_k]_l, [\xB_k-\yB_k]_l\rangle\\
		=~& \FB(\yB_k) + \frac{B\alpha_{3,s}}{2L_Q}\|[\nabla \FB(\yB_k) - \vB_k]_l\|^2 + B\alpha_{2,s}\langle[\vB_k]_l, [\tilde{\zB}_k - \zB_{k-1}]_l\rangle \\
		&+ \frac{\bar{L}_s\alpha_{2,s}^2B^2}{2}\|[\tilde{\zB}_k - \zB_{k-1}]_l\|^2
	\end{align*}
	Taking expectation with respect to $l$, we have
	\begin{align*}
		&\EBB_l\FB(\xB_k) \\
		\leq~& \FB(\yB_k) + \frac{B\alpha_{3,s}}{2L_Q}\EBB_l\|[\nabla \FB(\yB_k) - \vB_k]_l\|^2 +B\alpha_{2,s}\EBB_l\langle[\vB_k]_l, [\tilde{\zB}_k - \zB_{k-1}]_l\rangle \\
		&+ \frac{\bar{L}_s\alpha_{2,s}^2B^2}{2}\EBB_l\|[\tilde{\zB}_k - \zB_{k-1}]_l\|^2 \\
		=~ & \FB(\yB_k) + \frac{\alpha_{3,s}}{2L_Q}\|\nabla \FB(\yB_k) - \vB_k\|^2 + \alpha_{2,s}\langle\vB_k, \tilde{\zB}_k - \zB_{k-1}\rangle \\
		&+ \frac{\bar{L}_s\alpha_{2,s}^2B}{2}\|\tilde{\zB}_k - \zB_{k-1}\|^2.
	\end{align*}
	Add the regularization term and use (\ref{eqn: convex combination}).
	\begin{align*}
		&\EBB_l \FB(\xB_k) + \hat{\PB}(\xB_k) \\
		\leq~& \FB(\yB_k) + \frac{\alpha_{3,s}}{2L_Q}\|\nabla \FB(\yB_k) - \vB_k\|^2 + \alpha_{2,s}\langle\vB_k, \tilde{\zB}_k - \zB_{k-1}\rangle + \frac{\bar{L}_s\alpha_{2,s}^2B}{2}\|\tilde{\zB}_k - \zB_{k-1}\|^2 + \EBB_l\hat{\PB}(\xB_k) \\
		=~& \FB(\yB_k) + \alpha_{3,s}\{\frac{1}{2L_Q}\|\nabla \FB(\yB_k) - \vB_k\|^2 + \langle\nabla \FB(\yB_k), \tilde{\xB} - \yB_k\rangle\}  \\
		&+ \alpha_{2,s}\langle\vB_k, \tilde{\zB}_k - \zB_{k-1}\rangle + \frac{\bar{L}_s\alpha_{2,s}^2B}{2}\|\tilde{\zB}_k - \zB_{k-1}\|^2 -\alpha_{3,s} \langle\nabla \FB(\yB_k), \tilde{\xB} - \yB_k\rangle \\
		& + \sum_{i=0}^{s-1}\lambda_k^i \PB(\tilde{\xB}^i)+ \beta_k\PB(\tilde{\xB}^s) + \sum_{l=0}^{k-1} \gamma_k^l\PB(\zB_l) + \alpha_{2,s}(B-1)\PB(\zB_{k-1})+\alpha_{2,s}\PB(\tilde{\zB}_k) \\
		=~& (1 - \alpha_{2,s})\FB(\yB_k) + \alpha_{3,s}\{\frac{1}{2L_Q}\|\nabla \FB(\yB_k) - \vB_k\|^2 + \langle\nabla \FB(\yB_k), \tilde{\xB} - \yB_k\rangle\}  \\
		&+ \alpha_{2,s}(\FB(\yB_k) + \langle\vB_k, \tilde{\zB}_k - \yB_k\rangle + \frac{\bar{L}_s\alpha_{2,s}B}{2}\|\tilde{\zB}_k - \zB_{k-1}\|^2 +\PB(\tilde{\zB}_k))\\
		& + \sum_{i=0}^{s-1}\lambda_k^i \PB(\tilde{\xB}^i) + \beta_k\PB(\tilde{\xB}^s) + \sum_{l=0}^{k-1} \gamma_k^l\PB(\zB_l) + \alpha_{2,s}(B-1)\PB(\zB_{k-1})  \\
		&+ \alpha_{2,s} \langle\vB_k, \yB_k - \zB_{k-1}\rangle -\alpha_{3,s} \langle\nabla \FB(\yB_k), \tilde{\xB} - \yB_k\rangle\\
		\leq~& (1 - \alpha_{2,s})\FB(\yB_k) + \alpha_{3,s}\{\frac{1}{2L_Q}\|\nabla \FB(\yB_k) - \vB_k\|^2 + \langle\nabla \FB(\yB_k), \tilde{\xB} - \yB_k\rangle\}  \\
		&+ \alpha_{2,s}(\FB(\yB_k) + \langle\vB_k, \xB^* - \yB_k\rangle +\PB(\xB^*) + \frac{\bar{L}_s\alpha_2m}{2}(\|\xB^* - \zB_{k-1}\|^2 - \|\xB^* - \tilde{\zB}_k\|^2))\\
		& + \sum_{i=0}^{s-1}\lambda_k^i \PB(\tilde{\xB}^i) + \beta_k\PB(\tilde{\xB}^s) + \sum_{l=0}^{k-1} \gamma_k^l\PB(\zB_l) + \alpha_{2,s}(B-1)\PB(\zB_{k-1})  \\
		&+ \alpha_{2,s} \langle\vB_k, \yB_k - \zB_{k-1}\rangle -\alpha_{3,s} \langle\nabla \FB(\yB_k), \tilde{\xB} - \yB_k\rangle
	\end{align*}
	Take expectation with respect to $i_k$ and rearrange terms.
	\begin{align*}
		&\EBB_{l, i_k} \FB(\xB_k) + \hat{\PB}(\xB_k) \\
		\leq~& (1 - \alpha_{2,s} - \alpha_{3,s})\FB(\yB_k) + \alpha_{3,s}\FB(\tilde{\xB}^s)\\
		&+ \alpha_{2,s}(\FB(\yB_k) + \langle\nabla \FB(\yB_k), \xB^* - \yB_k\rangle +\PB(\xB^*) + \frac{\bar{L}_s\alpha_{2,s}B}{2}(\|\xB^* - \zB_{k-1}\|^2 - \EBB_{i_k}\|\xB^* - \tilde{\zB}_k\|^2))\\
		&+ \sum_{i=0}^{s-1}\lambda_k^i \PB(\tilde{\xB}^i) + \beta_k\PB(\tilde{\xB}) + \sum_{l=0}^{k-1} \gamma_k^l\PB(\zB_l) + \alpha_{2,s}(B-1)\PB(\zB_{k-1})\\
		&+ \alpha_{2,s} \langle\nabla \FB(\yB_k), \yB_k - \zB_{k-1}\rangle -\alpha_{3,s} \langle\nabla \FB(\yB_k), \tilde{\xB} - \yB_k\rangle\\
		\leq~& (1 - \alpha_{2,s} - \alpha_{3,s})\FB(\yB_k) + \alpha_{3,s}\FB(\tilde{\xB})\\
		&+ \alpha_{2,s}(\FB(\yB_k) + \langle\nabla \FB(\yB_k), \xB^* - \yB_k\rangle +\PB(\xB^*) + \frac{\bar{L}_s\alpha_{2,s}B}{2}(\|\xB^* - \zB_{k-1}\|^2 - \EBB_{i_k}\|\xB^* - \tilde{\zB}_k\|^2))\\
		&+ \sum_{i=0}^{s-1}\lambda_k^i \PB(\tilde{\xB}^i) + \beta_k\PB(\tilde{\xB}) + \sum_{l=0}^{k-1} \gamma_k^l\PB(\zB_l) + \alpha_2(B-1)\PB(\zB_{k-1}) + \alpha_{1,s}\langle\nabla \FB(\yB_k), \xB_{k-1} - \yB_k\rangle \\
		\leq~& \alpha_{3,s}(\FB(\tilde{\xB})+\PB(\tilde{\xB})) + \alpha_{2,s}(\FB(\xB^*) +\PB(\xB^*)) + \frac{\bar{L}_s\alpha_{2,s}^2B}{2}(\|\xB^* - \zB_{k-1}\|^2 - \EBB_{i_k}\|\xB^* - \tilde{\zB}_k\|^2)\\
		& + \alpha_{1,s}\sum_{i=0}^{s-1}\lambda_{k-1}^i \PB(\tilde{\xB}^i) + \alpha_{1,s}\beta_{k-1}\PB(\tilde{\xB}) + \alpha_{1,s}\sum_{l=0}^{k-1} \gamma_{k-1}^l\PB(\zB_l)  + \alpha_{1,s}\FB(\xB_{k-1})\\
		=~& \alpha_{3,s}(\FB(\tilde{\xB})+\PB(\tilde{\xB})) + \alpha_{2,s}(\FB(\xB^*) +\PB(\xB^*)) + \alpha_{1,s}(\FB(\xB_{k-1})+\hat{\PB}(\xB_{k-1}))\\
		&+ \frac{\bar{L}_s\alpha_{2,s}^2B^2}{2}(\|\xB^* - \zB_{k-1}\|^2 - \EBB_{l, i_k}\|\xB^* - \zB_k\|^2)\\
	\end{align*}
	Subtract $\FB(\xB^*) +\PB(\xB^*)$ from both sides and use the fact that $\alpha_{1,s} + \alpha_{2,s} + \alpha_{3,s} = 1$, we have
	\begin{align*}
		&\EBB_{l, i_k} \fB^{\hat{\PB}}(\xB_k)  - \fB^\PB(\xB^*)\\
		\leq~& \alpha_{3,s}(\fB^\PB(\tilde{\xB}) - \fB^\PB(\xB^*)) + \alpha_{1,s}(\fB^{\hat{\PB}}(\xB_{k-1})  - \fB^\PB(\xB^*))\\
		& + \frac{\bar{L}_s\alpha_{2,s}^2B^2}{2}(\|\xB^* - \zB_{k-1}\|^2 - \EBB_{k, i_k}\|\xB^* - \zB_k\|^2)
	\end{align*}
	Define $\hat{d}_k \defi \fB^{\hat{\PB}}(\xB_k)  - \fB^\PB(\xB^*)$ and $\tilde{d} \defi \fB^\PB(\tilde{\xB}) - \fB^\PB(\xB^*)$, we have
	\begin{equation}
		\frac{1}{\alpha_{2,s}^2}\EBB_{k, i_k} \hat{d}_k \leq \frac{\alpha_{1,s}}{\alpha_{2,s}^2}\hat{d}_{k-1} + \frac{\alpha_{3,s}}{\alpha_{2,s}^2} \tilde{d} + \frac{\bar{L}_sB^2}{2}(\|\xB^* - \zB_{k-1}\|^2 - \EBB_{k, i_k}\|\xB^* - \zB_k\|^2)
	\end{equation}
	Summing from $k = sm + 1$ to $(s+1)m$, we get
	\begin{equation}
		\frac{1}{\alpha_{2,s}^2} \sum_{j=1}^{m}\hat{d}_{j,s} \leq \frac{\alpha_{1,s}}{\alpha_{2,s}^2} \sum_{j=0}^{m-1} \hat{d}_{j,s} + \frac{\alpha_{3,s}}{\alpha_{2,s}^2}m\tilde{d}_s + \frac{\bar{L}_sB^2}{2}\{\|\xB^* - \zB_{0,s}\|^2 -\|\xB^* - \zB_{m,s}\|^2\}.
	\end{equation}
	By rearranging terms, we have
	\begin{equation}
		\label{eqn: I}
		\frac{\alpha_{1,s}}{\alpha_{2,s}^2}\hat{d}_{m,s} + \frac{1-\alpha_{1,s}}{\alpha_{2,s}^2} \sum_{j=1}^{m} \hat{d}_{j,s} \leq \frac{\alpha_{1,s}}{\alpha_{2,s}^2}\hat{d}_{0,s} + \frac{\alpha_{3,s}}{\alpha_{2,s}^2}m\tilde{d}_s + \frac{\bar{L}_sB^2}{2}\{\|\xB^* - \zB_{0, s}\|^2 -\|\xB^* - \zB_{m, s}\|^2\}.
	\end{equation}
	Using the fact that 
	$\frac{1 - \alpha_{1,s}}{\alpha_{2,s}^2} = \frac{\alpha_{3,s+1}}{\alpha_{2,s+1}^2}, 
		\frac{1}{\alpha_{2,s}^2} = \frac{1 - \alpha_{2,s+1}}{\alpha_{2,s+1}^2},$
	we have $\frac{\alpha_{1,s}}{\alpha_{2,s}^2} = \frac{\alpha_{1,s+1}}{\alpha_{2,s+1}^2}$.
	From the definition of $\tilde{\xB}^s$, we have
	\begin{equation}
		 m\tilde{d}_{s+1}\leq\sum_{j=1}^{m}d_{j, s}\leq\sum_{j=1}^{m}\hat{d}_{j, s},
		 \hat{d}_{0,s} = \hat{d}_{m,s-1},
		 \zB_{m, s-1} = \zB_{0, s}
	\end{equation} 
	Additionally, using $\bar{L}_{s+1}\leq\bar{L}_s$,
	we have the following inequality
	\begin{equation}
		\begin{aligned}
			&\frac{\alpha_{1,s+1}}{\alpha_{2,s+1}^2}\hat{d}_{0,s+1} + \frac{\alpha_{3,s+1}}{\alpha_{2,s+1}^2} m\tilde{d}_{s+1} + \frac{\bar{L}_{s+1}B^2}{2}\|\xB^* - \zB_{0, s+1}\|^2 \\
			\leq & \frac{\alpha_{1,s}}{\alpha_{2,s}^2}\hat{d}_{0,s} + \frac{\alpha_{3,s}}{\alpha_{2,s}^2}m \tilde{d}_s + \frac{\bar{L}_sB^2}{2}\|\xB^* - \zB_{0, s}\|^2
		\end{aligned}
	\end{equation}
	\begin{equation}
		 \tilde{d}_{s+1} \leq \frac{\alpha_{2,s+1}^2}{\alpha_{3,s+1}}(\frac{\alpha_{1,0}}{\alpha_{2,0}^2}\frac{d_0}{m} + \frac{\alpha_{3,0}}{\alpha_{2,0}^2} d_0 + \frac{\bar{L}_0B^2}{2m}\|\xB^* - \zB_0\|^2)
	\end{equation}
	\end{proof}
	The non-proximal setting is special case of Theorem 2, and can be obtain by setting $\PB(\xB) = 0$ and remove the initialization constraints $\alpha_{2, 0} = \alpha_{3, 0} =1/2B$. 
\end{document}